\def\eqref#1{equation~\ref{#1}}
\def\1{\bm{1}}
\DeclareMathAlphabet{\mathsfit}{\encodingdefault}{\sfdefault}{m}{sl}
\SetMathAlphabet{\mathsfit}{bold}{\encodingdefault}{\sfdefault}{bx}{n}
\newtheorem{ASS}{Assumption}[section]
\newcommand\TODO[1][]{{\color{orange}[TODO\ifthenelse{\equal{#1}{}}{}{: #1}]}}
\newcommand\SEC\section
\newcommand\SSEC\subsection
\newcommand\SSSEC\subsubsection
\newcommand\BB[1]{\mathbb{#1}}
\newcommand\CAL[1]{\mathcal{#1}}
\newcommand\OP[1]{\operatorname{#1}}
\newcommand\AL[1]{\begin{align}#1\end{align}}
\newcommand\Exp{\mathbb{E}}
\definecolor{darkgreen}{rgb}{0.0, 0.5, 0.0} 
\renewcommand{\arraystretch}{1.12}
\renewcommand{\eqref}[1]{Eq.~\ref{#1}}
\theoremstyle{plain}
\newtheorem{theorem}{Theorem}[section]
\theoremstyle{definition}
\theoremstyle{remark}
\newtheorem{remark}[theorem]{Remark}
\newcommand{\ours}{LatentMAS\xspace}
\newcommand{\gaincell}[1]{\cellcolor{green!66!black!20}{\textbf{#1}}}
\definecolor{impr}{RGB}{34, 139, 34}
\definecolor{lightred}{RGB}{255, 230, 230}
\definecolor{darkred}{RGB}{192, 0, 0}
\definecolor{bestbg}{HTML}{FFF2B2}   %
\definecolor{secondbg}{HTML}{E8F0FE} %
\definecolor{LavenderLight}{HTML}{C7C3F5}
\definecolor{LightCoral}{RGB}{240,128,128}
\definecolor{LightBlue}{RGB}{173,216,230}
\icmltitlerunning{Latent Collaboration in Multi-Agent Systems}
\begin{document}

\twocolumn[{
  \icmltitle{Latent Collaboration in Multi-Agent Systems}

    \icmlsetsymbol{lead}{*}
    \icmlsetsymbol{core}{$\dagger$}
    \icmlsetsymbol{corr}{\Letter}

\begin{icmlauthorlist}
  \icmlauthor{Jiaru Zou}{princeton,stanford,lead,core}
  \icmlauthor{Ruizhong Qiu}{uiuc,core}
  \icmlauthor{Gaotang Li}{uiuc,core}
  \icmlauthor{Xiyuan Yang}{uiuc,core}
  \icmlauthor{Katherine Tieu}{uiuc,core}
  \icmlauthor{Pan Lu}{stanford,core}
  \icmlauthor{Ke Shen}{}
  \icmlauthor{Hanghang Tong}{uiuc}
  \icmlauthor{Yejin Choi}{stanford}
  \icmlauthor{Jingrui He}{uiuc,corr}
  \icmlauthor{James Zou}{stanford,corr}
  \icmlauthor{Mengdi Wang}{princeton,corr}
  \icmlauthor{Ling Yang}{princeton,corr}
\end{icmlauthorlist}

\begin{center}
{\fontsize{10pt}{10pt}\selectfont
\raisebox{-0.18em}{\includegraphics[height=1em]{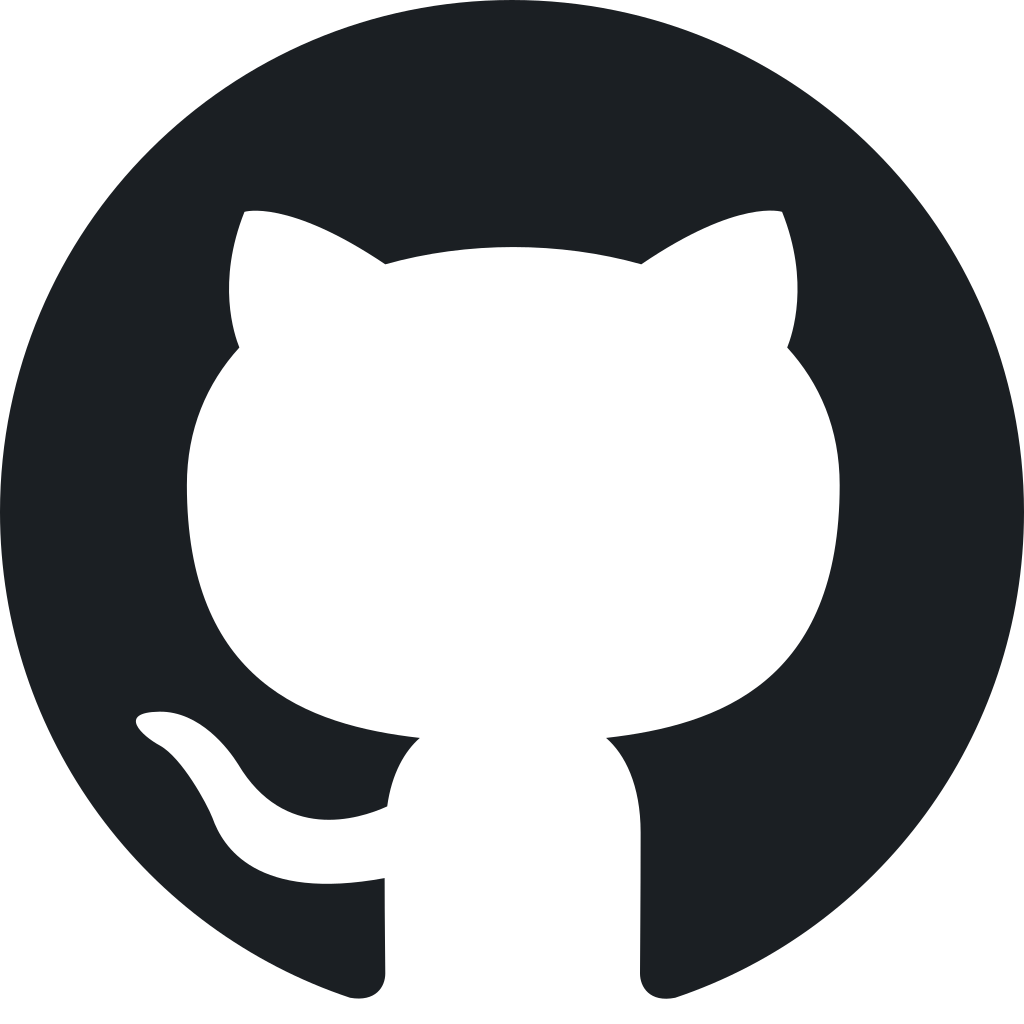}}
Project Page: \href{https://github.com/Gen-Verse/LatentMAS}{https://github.com/Gen-Verse/LatentMAS}
}
\end{center}
\vspace{-15pt}

\icmlaffiliation{princeton}{Princeton University}
\icmlaffiliation{stanford}{Stanford University}
\icmlaffiliation{uiuc}{UIUC}

\icmlcorrespondingauthor{}{jingrui@illinois.edu}
\icmlcorrespondingauthor{}{jamesz@stanford.edu}
\icmlcorrespondingauthor{}{mengdiw@princeton.edu}
\icmlcorrespondingauthor{}{ly1988@princeton.edu}

  \icmlkeywords{Machine Learning, ICML}

  \vskip 0.3in

\vspace{-15pt}
\begin{center}
    \includegraphics[width=0.88\textwidth]{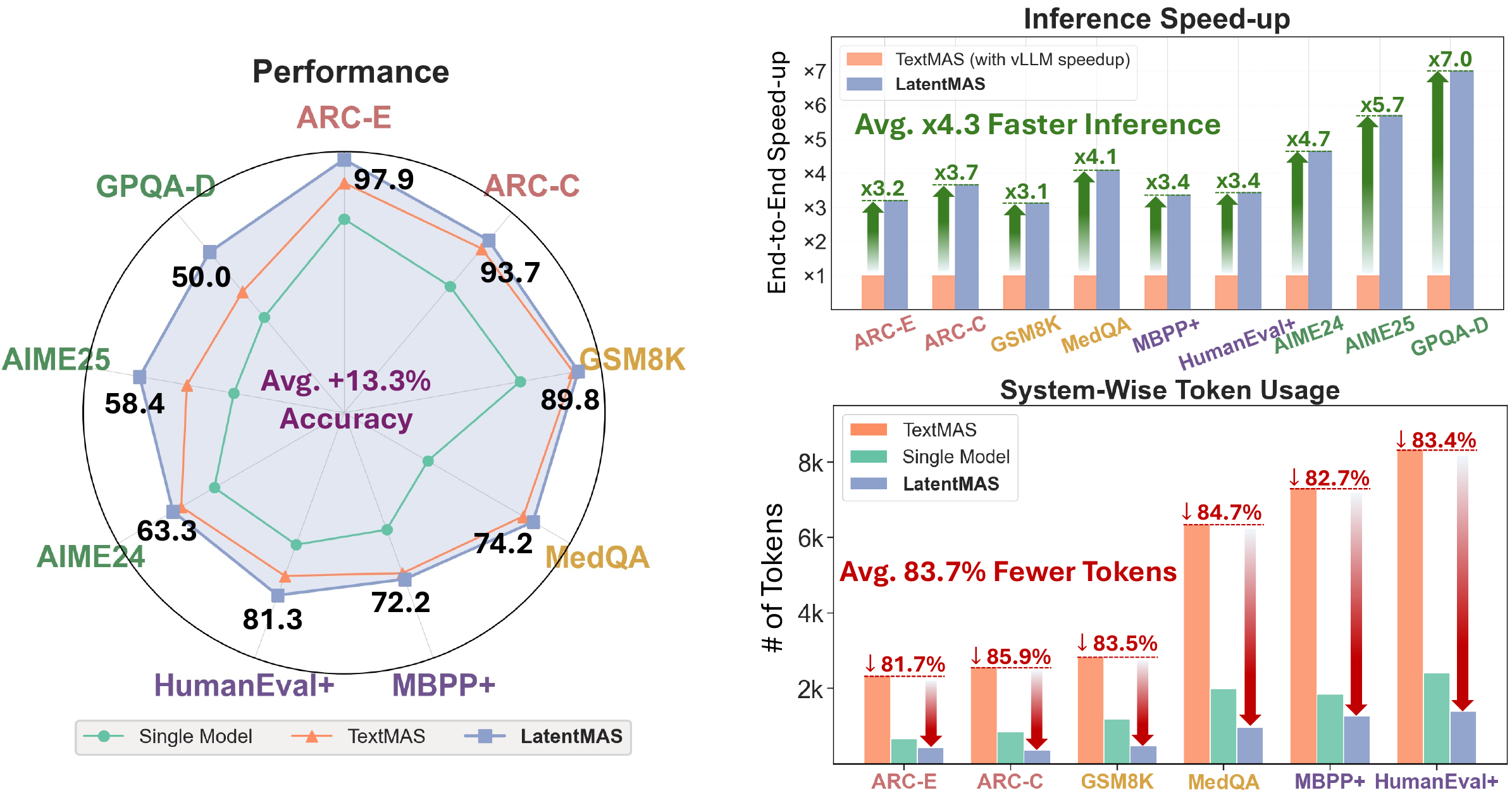}
    \vspace{-5pt}
    \captionof{figure}{Performance of \ours across (i) task accuracy (\%), (ii) inference speed (time(s) / run), and (iii) token usage (per token) over 9 benchmarks and 3 LLM model scales under the Hierarchical MAS setting. \ours consistently improves system-level reasoning accuracy while substantially reducing computational overhead compared with single model and text-based MAS.}
    \label{fig:hero}
\end{center}

}]

\printAffiliationsAndNotice{%
  \textsuperscript{*}Project Lead.
  \textsuperscript{$\dagger$}Core Contributors.
}

\begin{abstract}
\vspace{-1pt}

Multi-agent systems (MAS) extend large language models (LLMs) from independent single-model reasoning to coordinative system-level intelligence.
While existing LLM agents depend on text-based mediation for reasoning and communication, we take a step forward by enabling models to collaborate directly within the continuous latent space. We introduce \textbf{LatentMAS}, an \textit{end-to-end training-free} framework that enables pure latent collaboration among LLM agents. 
In LatentMAS, each agent first performs auto-regressive latent thoughts generation through last-layer hidden embeddings instead of text. Then, a shared latent working memory preserves and transfers each agent's internal representations and latent thoughts, ensuring lossless information exchange without re-encoding. We provide detailed theoretical analyses showing that LatentMAS achieves higher expressiveness and lossless information preservation with lower overall complexity than standard text-based MAS. 
In addition, empirical evaluations across 9 comprehensive benchmarks spanning math and science reasoning, commonsense understanding, and code generation show that LatentMAS outperforms advanced single agents and text-based MAS baselines, achieving up to 14.6\% higher accuracy, reducing output token usage by 70.8\%-83.7\%, and providing 4$\times$-4.3$\times$ faster end-to-end inference.

\end{abstract}

\addtocontents{toc}{\protect\setcounter{tocdepth}{-1}}
\vspace{-25pt}
\section{Introduction}

Model collaboration emerges as the foundation of system-level intelligence in the era of Agentic AI \citep{acharya2025agentic}. 
Recent advances in multi-agent systems (MAS) \citep{wu2024autogen, hu2025owl} have catalyzed a paradigm shift from solitary, model-centric reasoning into a collaborative endeavor among multiple interacting models. Among these, large language model (LLM)-based MAS has been adopted across various downstream applications, including cooperative math and science reasoning \citep{pezeshkpour2024reasoning, zhou2025reso}, automated code generation \cite{yang2024sweagent, ding2025toolcoder}, and open-domain question answering \citep{jin2025search, li2025flow, Eywa2026}.
Within LLM-based MAS, natural language or text generally serves as the \textit{lingua franca} -- the common medium that carries each agent's internal thoughts and enables communication across different agents \citep{guo2024large}.

Beyond explicit text, several studies have explored the use of LLMs’ continuous latent space as a new form of ``model language,'' \citep{chen2025reasoning,zhu2025survey} by either (i) leveraging hidden representations within transformers to enable single model's internal latent chain-of-thought (CoT) reasoning \citep{hao2024training, zhang2025soft}, or (ii) employing KV caches or layer embeddings for information exchange across models \citep{liu2024deliberation,zheng2025thought,fu2025cache}.
However, a comprehensive model collaboration framework unifying both latent reasoning and latent communication remains unexplored. Moving one step forward, we aim to investigate:

\begin{tcolorbox}[
    enhanced,
    colback=white,      %
    colframe=white,            %
    coltitle=white,            %
    fonttitle=\bfseries,       %
    boxrule=.1pt,
    width=0.5\textwidth,
    top=1mm,
    bottom=1mm,
    left=0mm,                  %
    right=0mm,                 %
    before skip=4pt, after skip=4pt,
    attach boxed title to top left={
        yshift=-2mm,
        xshift=2mm
    },
    boxed title style={
        colback=black,
        sharp corners,
        boxrule=0pt,
        top=2pt, bottom=2pt, left=2pt, right=2pt
    }，
]
\vspace{-5pt}
\begin{center}
    \textbf{\raisebox{-0.2\height}{\includegraphics[height=1.3em]{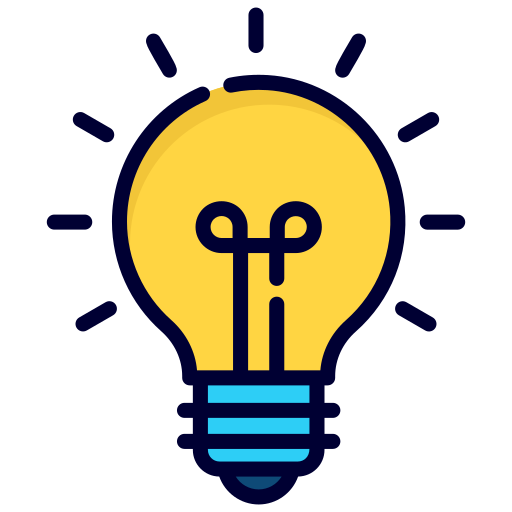}}%
    \ \textit{Can MAS achieve pure latent collaboration?}}
\end{center}
\vspace{-7pt}
\end{tcolorbox}

To address this question, we introduce \textbf{\ours}, an end-to-end collaborative framework that operates entirely within the continuous latent space. Our core design integrates both internal \textit{latent thoughts generation} and cross-agent \textit{latent working memory transfer}. Inside each agent, reasoning unfolds through auto-regressive generation of last-layer hidden representations, capturing the model’s ongoing internal thoughts without explicit decoding. 
Across agents, information is exchanged via shared latent working memory stored in layer-wise KV caches, capturing both the input context and newly generated latent thoughts. Overall, \ours is completely \textit{training-free}, enabling all agents to think and interact purely through their internal latent representations.

Building on this framework design, \ours is grounded on three foundational principles, which are substantiated through comprehensive theoretical and empirical analyses:
\begin{tcolorbox}[
    enhanced,
    colback=black!3,      %
    colframe=black,            %
    coltitle=white,            %
    fonttitle=\bfseries,       %
    boxrule=0.5pt,
    width=0.48\textwidth,
    top=1mm,
    bottom=1mm,
    left=1mm,                  %
    right=1mm,                 %
    before skip=2pt, after skip=1pt,
    attach boxed title to top left={
        yshift=-2mm,
        xshift=2mm
    },
    boxed title style={
        colback=black,
        sharp corners,
        boxrule=0pt,
        arc=1mm,
        top=.5pt, bottom=.5pt, left=4pt, right=4pt
    }
]

\begin{itemize}[leftmargin=*]
    \item \textbf{Reasoning Expressiveness:} 
    Hidden representations naturally encode model continuous thoughts, making each latent step more expressive than discrete tokens and capable of conveying richer semantic information.

    \item \textbf{Communication Fidelity:} 
    Latent working memory preserves input representations and latent thoughts, enabling lossless information transfer across agents.

    \item \textbf{Collaboration Complexity:} \ours exhibits higher collaborative expressiveness than text-based MAS with substantially lower inference complexity.
    
\end{itemize}

\end{tcolorbox}
The first two principles jointly underscore the advantage of \ours by enabling richer latent reasoning and lossless latent communication. The last principle characterizes how latent collaboration reshapes the efficiency-expressiveness tradeoff, illustrating that \ours enables scalable multi-agent coordination with reduced computational overhead.

To empirically assess the efficacy of \ours, we conduct comprehensive evaluations on 9 benchmarks spanning math and science reasoning, commonsense understanding, and code generation (Figure \ref{fig:hero}). 
Across both sequential and hierarchical MAS settings, and over five different backbone models spanning the Qwen3 \cite{yang2025qwen3} and Llama3 \cite{grattafiori2024llama} families, \ours consistently outperforms advanced single-model and text-based MAS baselines by (i) improving accuracy by up to 14.6\%, (ii) reducing output token usage by 70.8\%-83.7\%, and (iii) delivering 4$\times$-4.3$\times$ faster end-to-end inference. These results demonstrate that latent collaboration not only enhances system-level reasoning quality but also provides consistent efficiency gains without any additional training. Further detailed analyses of latent thought expressiveness, working-memory transfer, and input-output alignment confirm that \ours enables semantically meaningful, lossless, and stable collaboration entirely in latent space.

\vspace{-8pt}
\section{Preliminary and Notations}

\textbf{Auto-regressive Generation in Transformer.} Let $f_\theta(\cdot)$ denotes the function computed by a standard Transformer model \citep{ashish2017attention}, parameterized by $\theta$. Given an input sequence $x = (x_1, x_2, \dots, x_T)$, the transformer $f_\theta(\cdot)$ first encodes each token via its input embedding layer $W_\text{in}$ to obtain token embeddings up to step $t$, i.e., $E=[e_1, e_2, \dots,e_t]\in\mathbb{R}^{t\times d_h}$, where $d_h$ is the model's hidden dimension. The input token embeddings $E$ then successively process through $L$ transformer layers in the forward pass through the model's residual stream, yielding the final-layer hidden representations $H = [h_1, h_2, \dots, h_t] \in\mathbb{R}^{t\times d_h}$. For next token generation, the model computes:
\begin{align}
    f_\theta(x_{t+1} \mid x_{\le t})
    &= \mathrm{softmax}(h_t W_{\text{out}}),
\end{align}
where $W_\text{out}$ denotes the language model head that maps the hidden representation to the vocabulary space. 
Each token is generated in an auto-regressive manner and appended to the input sequence. 
For \textit{latent generation}, the model performs auto-regression over hidden representations instead of tokens, bypassing explicit decoding \citep{hao2024training}.

\textbf{KV Cache as Working Memory.} 
In decoder-only Transformers, the Key-Value (KV) cache functions as a dynamic working memory during auto-regressive generation, storing intermediate representations from previous decoding steps to avoid redundant computation. 
Specifically, given the input embeddings $E$, each transformer layer projects them through projection matrices $W_Q, W_K, W_V$ to obtain $Q, K, V$.
When the next token at step $t+1$ is generated, the model appends its embedding to the input sequence and updates the cache ($K_{\mathrm{cache}}, V_{\mathrm{cache}}$) as:
\begin{equation}
K_{\mathrm{cache}} \gets [K_{\le t}; K_{t+1}], \quad 
V_{\mathrm{cache}} \gets [V_{\le t}; V_{t+1}], 
\end{equation}
where $K_{\le t}$, $V_{\le t}$ are accumulated key/value matrices from all previous steps and $K_{t+1}$, $V_{t+1}$ are new key/value vectors computed from the current token's hidden state.
This accumulative property enables the KV cache to maintain a growing working memory of model internal representations.

\begin{figure}[!h]
    \centering
    \vspace{-6pt}
    \includegraphics[width=\linewidth]{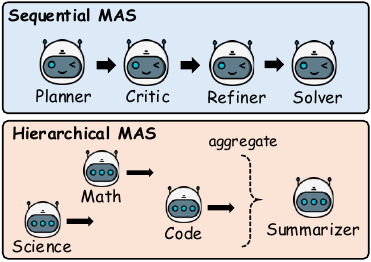}
    \caption{Illustration of sequential and hierarchical MAS settings.}
    \label{fig:MAS}
    \vspace{-8pt}
\end{figure}

\begin{figure*}[!th]
    \centering
    \includegraphics[width=\linewidth]{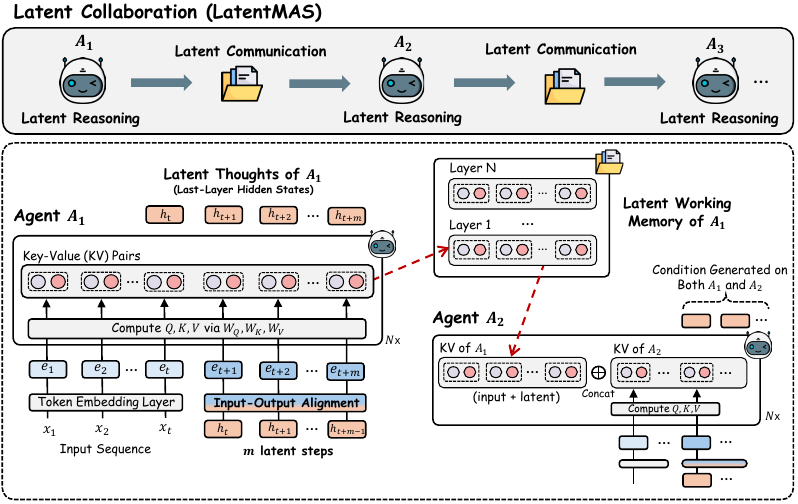}
    \vspace{-15pt}
    \caption{\textbf{Overview of the \ours framework.}
    To enable end-to-end system-wide latent collaboration, each LLM agent in \ours first performs reasoning in latent space by generating latent thoughts from last-layer hidden states (\textbf{Section \ref{sec:latent_reasoning}}), and then communicates its internal reasoning to subsequent agents via layer-wise transfer through shared latent working memory (\textbf{Section \ref{sec:latent_communication}}).}

    \label{fig:method}
    \vspace{-15pt}
\end{figure*}

\textbf{LLM-based MAS Setting.}
We consider a multi-agent system $\mathcal{S}$ composed of $N$ agents, denoted as $\mathcal{A} = \{A_1, A_2, \dots, A_N\},$ where each agent $A_i$ is an LLM corresponding to $f_{\theta_i}$ above. At inference time, an input question $q$ is provided to the system $\mathcal{S}$, which orchestrates interactions among agents to collaboratively produce a final answer $a$ corresponding to $q$.
As MAS design paradigms are not definitive in general and often vary across downstream tasks \citep{tran2025multi,cemri2025multi}, we do not restrict our latent collaboration design to any particular architecture. Instead, we adopt two most commonly used MAS settings (\textit{sequential} and \textit{hierarchical}) as the bases to experimentally evaluate our method.
Figure~\ref{fig:MAS} illustrates the two MAS architecture settings. 
In the \textbf{sequential MAS}, we adopt a chain-of-agents design \citep{zhang2024chain, zhao2025connecting} comprising four LLM agents: \texttt{planner}, \texttt{critic}, \texttt{refiner}, and \texttt{solver}.
These agents assume complementary reasoning roles and are organized in a sequential pipeline, where the CoT output of each agent with the question $q$ serves as the input to the next agent. 
In the \textbf{hierarchical MAS}, we adopt a domain-specialized design \citep{zhuge2024language, zhao2025sirius}. Multiple LLM agents, including \texttt{code}, \texttt{math}, and \texttt{science} agents, operate as different domain experts. Each agent independently reasons over the question $q$ from its disciplinary perspective. A \texttt{summarizer} agent then receives all intermediate responses along with the question $q$ and performs hierarchical aggregation to synthesize and refine the final answer.

\section{\ours}

We introduce \ours, an end-to-end latent collaboration framework that all agents reason and communicate entirely within the latent space.
As illustrated in Figure \ref{fig:method}, our method enables LLM agents to (i) auto-regressively generate expressive latent thoughts (Section \ref{sec:latent_reasoning}), (ii) transfer latent working memory across agents with lossless communication fidelity (Section \ref{sec:latent_communication}), and (iii) achieve lower inference complexity than text-based MAS while supporting a higher degree of model coordination (Section \ref{sec:complexity}).

\textbf{Method Roadmap.} In the following sub-sections, we present the complete pipeline of \ours, detailing each component and interleaving theoretical analyses to justify the corresponding design principles.

\vspace{-6pt}
\subsection{Auto-regressive Latent Thoughts Generation} 
\label{sec:latent_reasoning}

We first describe how each LLM agent performs latent reasoning through its layer-wise hidden states. 
Instead of decoding explicit tokens, reasoning unfolds directly within the agent by auto-regressively appending hidden representations from the final model layer.

Given the input embeddings $E = [e_1, e_2, \dots, e_t]$ containing the information from the question $q$ and each agent's instruction prompt, each LLM agent $A_i \in \mathcal{A}$ passes $E$ through $L$ transformer layers to compute the last-layer hidden representation $h_{t}$ at current step $t$. Then, we insert $h_{t}$ as the input embedding for the next step $t+1$, replacing the original decoding and next-token embedding processes used in standard token generation. 
We auto-regressively repeat the process for $m$ latent steps, yielding a sequence of newly generated last-layer hidden states $H = [ h_{t+1}, h_{t+2}, \dots, h_{t+m}]$. We define the continuous output representations $H$ as the \textit{latent thoughts} generated by $A_i$.

\textbf{Input-Output Distribution Alignment.}  
Since the newly generated $H$ form a sequence of dense, high-level representations, directly inserting them into shallow layers as input embeddings may lead to out-of-distribution activations \citep{meegahapola2019prior, zhou2019improving}
, as these hidden states differ from the statistical patterns of learned token embeddings. To mitigate this in a training-free manner, we propose a \textit{linear alignment operator} that maps last-layer hidden states back to the valid input embeddings. 
Specifically, given $W_{\text{in}}$, $W_{\text{out}}$ as the input and output embedding layers of $A_i$, we seek a projection matrix $W_a \in \mathbb{R}^{d_h \times d_h}$ that maps each output vector $h \in H$ to a new input vector $e$
to align with valid input space defined by $W_{\text{in}}$:

\begin{align}
\label{eq:alignment}
    e = h W_a, \quad \text{where } W_a \approx W_{\text{out}}^{\dagger} W_{\text{in}}.
\end{align}
Here, $W_{\text{out}}^{\dagger}$ is the pseudo-inverse \citep{hoerl1970ridge} of $W_{\text{out}}$
(in Appendix \ref{app:input_output}, we provide a detailed theoretical justification for the effectiveness of $W_a$ in the input--output alignment process, followed by a formal derivation of $W_a$).
We then append the aligned vector $e$ into the input sequence for auto-regressive latent generation.
Note that $W_a$ is a small projection matrix of size $d_h \times d_h$ (e.g., $d_h$=1024 for Qwen3-0.6B) and is computed once and reused in all subsequent latent steps. 
This design makes the alignment computationally negligible while maintaining distributional consistency between latent and discrete representations.

\textbf{Expressiveness on Continuous Latent Thoughts.} 
With the mechanism of latent thought generation established within each agent, we next provide a theoretical analysis to quantify its representational advantage over conventional discrete token generation. The following theorem formalizes that latent thoughts, which inherently preserve richer semantic structures, achieve substantially higher expressive capacity than discrete text-based reasoning.

\begin{theorem}[\textbf{Expressiveness of Latent Thoughts}]
\label{PRP:express}

Under the Linear Representation Hypothesis on $h$ (stated in Assumption~\ref{ASS:lin-rep}), if the sequence of all latent thoughts with length $m$ can be expressed losslessly through corresponding text-based reasoning, then the length of text (in tokens) needs to be at least 
$\Omega\big(d_hm/\log|\CAL V|\big),$
where $|\mathcal{V}| > 1$ denotes the vocabulary size.
\end{theorem}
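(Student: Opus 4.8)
The plan is to prove the lower bound through an information-theoretic counting (pigeonhole) argument that compares the number of distinct configurations a sequence of $m$ latent thoughts can realize against the number of distinct token strings of a given length. The crucial observation is that ``expressed losslessly through text'' must mean there exists an \emph{injective} map from the set of realizable latent-thought sequences into the set of admissible token strings: if every distinct latent configuration has to be recoverable from its text encoding, no two configurations may collide. An injection from a larger finite set into a smaller one is impossible, so this constraint forces a lower bound on the number of tokens.

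First I would invoke the Linear Representation Hypothesis (Assumption~\ref{ASS:lin-rep}) to lower-bound the cardinality of distinguishable states that a single latent thought $h \in \mathbb{R}^{d_h}$ carries. Under the LRH, semantic features correspond to (approximately) linearly independent directions in the $d_h$-dimensional hidden space, so a hidden state can independently encode the presence or absence of $\Theta(d_h)$ binary features; this yields at least $2^{c d_h}$ mutually distinguishable latent states for some constant $c > 0$. Because the $m$ latent thoughts are produced auto-regressively position-by-position, the number of distinct realizable latent-thought sequences of length $m$ is at least $(2^{c d_h})^m = 2^{c d_h m}$. On the text side, a reasoning trace of $n$ tokens drawn from vocabulary $\mathcal{V}$ takes at most $|\mathcal{V}|^n$ values; permitting any length up to $n$ only multiplies this by a constant, since $\sum_{k \le n} |\mathcal{V}|^k \le 2 |\mathcal{V}|^n$ whenever $|\mathcal{V}| \ge 2$, and this factor is absorbed into the asymptotic constant. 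Lossless expressibility therefore requires $|\mathcal{V}|^n \ge 2^{c d_h m}$; taking base-$2$ logarithms gives $n \log_2 |\mathcal{V}| \ge c\, d_h m$, hence $n \ge c\, d_h m / \log_2 |\mathcal{V}| = \Omega\big(d_h m / \log |\mathcal{V}|\big)$, as claimed.

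The main obstacle is the first step: extracting from the Linear Representation Hypothesis the quantitative fact that the number of distinguishable latent configurations per step is \emph{exponential} in $d_h$ rather than merely polynomial. I expect the assumption to be stated so that a number of near-orthogonal feature directions linear in $d_h$ can be toggled independently, which is precisely what the counting demands. The remaining subtlety is ensuring these configurations stay mutually distinguishable under the lossless correspondence, so that they genuinely map to distinct text traces and the map is well-defined as an injection; once the exponential count per step is pinned down, the rest is routine cardinality bookkeeping and the geometric-series bound above.
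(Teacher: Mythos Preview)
Your proposal is correct and follows essentially the same counting/pigeonhole argument as the paper: under Assumption~\ref{ASS:lin-rep} each hidden state is a linear combination of $d_h$ independent basis vectors with ternary coefficients, giving $|\mathcal{H}|=3^{d_h}$ (your $2^{cd_h}$ with a specific constant), so $|\mathcal{H}^m|=3^{d_hm}$, and lossless representation forces $|\mathcal{V}|^{m'}\ge 3^{d_hm}$, hence $m'\ge d_hm\log 3/\log|\mathcal{V}|$. The only cosmetic difference is that the paper phrases the lossless condition as a surjection from $\mathcal{V}^{m'}$ onto $\mathcal{H}^m$ rather than an injection the other way, but this yields the identical cardinality inequality.
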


\begin{remark}
\label{remark:express}
Theorem \ref{PRP:express} suggests that latent thoughts generation can be $O\big(d_h/\log|\CAL V|\big)$ times more efficient than text-based reasoning. In addition, the expressiveness scales linearly with $d_h$, implying that larger models inherently exhibits greater latent reasoning capacity.
\end{remark}

As an illustration to Remark \ref{remark:express}, for Qwen3-4B / 8B / 14B models \citep{yang2025qwen3}, latent thoughts generation can be 235.7 / 377.1 / 471.4 times more efficient than text-based reasoning. The full proof of Theorem \ref{PRP:express} is provided in Appendix \ref{app:express}.
Beyond reasoning within individual agents, collaboration in \ours further relies on how these agents exchange latent information, which we detail next.

\vspace{-8pt}
\subsection{Thoughts Transfer via Latent Working Memory}
\vspace{-2pt}
\label{sec:latent_communication}

In text-based MAS, after one LLM agent completes its generation, the natural language output is directly appended to the input sequence of the next agent. 
However, since each agent in \ours performs hidden-state generation without explicit text outputs, we design a new \textit{latent working memory} transfer mechanism to ensure lossless information preservation and exchange.

For clarity, we describe the memory transfer mechanism using the first two consecutive LLM agents $A_1, A_2 \in \mathcal{A}$. 
As shown in Figure \ref{fig:method}, agent $A_1$ first performs $m$ latent steps of generation (Section~\ref{sec:latent_reasoning}).
After completing these steps, we extract the KV-caches from all $L$ transformer layers of $A_1$ once and collect the \textit{latent working memory} of $A_1$ as:
\begin{equation}
\begin{alignedat}{2}
\mathcal{M}_{A_1} &= 
\left\{
\left( 
K^{(l)}_{A_1,\mathrm{cache}}, 
V^{(l)}_{A_1,\mathrm{cache}} 
\right)
\,\middle|\,
l = 1, 2, \dots, L
\right\}, \\
&\text{with } 
K^{(l)}_{A_1,\mathrm{cache}} = [K^{(l)}_{A_1,1}, \dots, K^{(l)}_{A_1,t+m}],
\\ & \text{and } V^{(l)}_{A_1,\mathrm{cache}} = [V^{(l)}_{A_1,1}, \dots, V^{(l)}_{A_1,t+m}].
\end{alignedat}
\end{equation}
Here $K^{(l)}_{A_1,\mathrm{cache}}$ 
and $V^{(l)}_{A_1, \mathrm{cache}}$ are accumulated key and value matrices at the $l$-th layer. 
Unlike direct cache-sharing methods that exchange information mainly on prefilled input context across models, the collection of layer-wise caches in $\mathcal{M}_{A_1}$ encapsulates both the initial input context and the newly generated latent thoughts of agent $A_1$.

Next, the successive agent $A_2$ integrates the working memory $\mathcal{M}_{A_1}$ from agent $A_1$. 
Before $A_2$ generates latent thoughts (i.e., last-layer hidden states), we perform layer-wise concatenation to update its KV cache by prepending each $K^{(l)}_{A_1,\mathrm{cache}}$ and $V^{(l)}_{A_1,\mathrm{cache}}$ to existing $K^{(l)}_{A_2,\mathrm{cache}}$ and $V^{(l)}_{A_2,\mathrm{cache}}$. 
By doing so, the new latent thoughts generation in $A_2$ is conditioned on both the working memory of $A_1$ and its own internal representations.

\begin{table*}[!t]
\centering
\renewcommand{\arraystretch}{1.18}
\caption{\textbf{Main results of \ours on 6 general tasks under the Sequential MAS setting.} We report 3 metrics in total, including task accuracy (\%, \textbf{``Acc."}), total output token usage (\textbf{``Token"}), and end-to-end inference speed (time(s) / run, \textbf{``Speed"}). We compare \ours with both TextMAS and single-model (``Single") baselines. For each metric, we \textbf{bold} the better performance and visualize \ours gains over TextMAS in the \colorbox{green!66!black!20}{\textbf{Improve}} columns. Additional results under the Hierarchical MAS setting are shown in Table \ref{tab:hier_mas}.}
\vspace{-6pt}
\label{tab:seq_mas}
\small
\resizebox{\textwidth}{!}{
\begin{tabular}{l|c|cccl|cccl|cccl}
\toprule
\multirow{2}{*}{\textbf{Tasks}} &
\multirow{2}{*}{\textbf{Metrics}} &
\multicolumn{3}{c}{\textbf{Qwen3-4B}} &
\multirow{2}{*}{\textbf{Improve}} &
\multicolumn{3}{c}{\textbf{Qwen3-8B}} &
\multirow{2}{*}{\textbf{Improve}} &
\multicolumn{3}{c}{\textbf{Qwen3-14B}} &
\multirow{2}{*}{\textbf{Improve}} \\
\cmidrule(lr){3-5}\cmidrule(lr){7-9}\cmidrule(lr){11-13}
& & Single & TextMAS & \ours & & Single & TextMAS & \ours & & Single & TextMAS & \ours & \\
\midrule [-0.4ex]\midrule\addlinespace[-0.000ex]
\rowcolor{gray!10}
\multicolumn{14}{c}{
    \rule{0pt}{1.1em}
    \textbf{\textit{Sequential MAS Setting}}
    \rule[-0.3em]{0pt}{1.1em}
} \\
[-0.4ex]\midrule\addlinespace[-0.000ex]

\cellcolor{red!5} & Acc. & 95.4 & 96.4 & \textbf{98.6} & \gaincell{$\uparrow$ 2.2} & 95.6 & 99.1 & 98.8 & $\downarrow$ 0.3 & 97.2 & 99.0 & \textbf{99.4} & \gaincell{$\uparrow$ 0.4} \\
\cellcolor{red!5}\textbf{ARC-E} & Token & 724 & 2420 & \textbf{581} & \gaincell{$\downarrow$ 76.0\%} & 656 & 2085 & \textbf{490} & \gaincell{$\downarrow$ 76.5\%} & 608 & 1670 & \textbf{224} & \gaincell{$\downarrow$ 86.6\%} \\
\cellcolor{red!5} & Speed & 369 & 2874 & \textbf{512} & \gaincell{$\times$5.6} & 404 & 3702 & \textbf{1759} & \gaincell{$\times$2.1} & 551 & 9171 & \textbf{2124} & \gaincell{$\times$4.3} \\
[-0.4ex]\midrule\addlinespace[-0.000ex]

\cellcolor{red!5} & Acc. & 89.2 & 90.0 & \textbf{92.3} & \gaincell{$\uparrow$ 2.3} & 91.0 & 94.6 & 94.4 & $\downarrow$ 0.2 & 92.6 & 95.9 & 95.6 & $\downarrow$ 0.3 \\
\cellcolor{red!5}\textbf{ARC-C} & Token & 913 & 2678 & \textbf{718} & \gaincell{$\downarrow$ 73.2\%} & 846 & 2252 & \textbf{529} & \gaincell{$\downarrow$ 76.5\%} & 773 & 2985 & \textbf{426} & \gaincell{$\downarrow$ 85.7\%} \\
\cellcolor{red!5} & Speed & 97 & 1579 & \textbf{260} & \gaincell{$\times$6.1} & 266 & 2059 & \textbf{703} & \gaincell{$\times$2.9} & 338 & 5125 & \textbf{1136} & \gaincell{$\times$4.5} \\
[-0.4ex]\midrule\addlinespace[-0.000ex]

\cellcolor{orange!7} & Acc. & 82.4 & 89.8 & 88.2 & $\downarrow$ 1.6 & 81.1 & 92.3 & \textbf{93.8} & \gaincell{$\uparrow$ 1.5} & 83.7 & 93.8 & \textbf{95.2} & \gaincell{$\uparrow$ 1.4} \\
\cellcolor{orange!7}\textbf{GSM8K} & Token & 1136 & 3172 & \textbf{607} & \gaincell{$\downarrow$ 80.9\%} & 1280 & 2324 & \textbf{860} & \gaincell{$\downarrow$ 63.0\%} & 1118 & 3324 & \textbf{644} & \gaincell{$\downarrow$ 80.6\%} \\
\cellcolor{orange!7} & Speed & 469 & 1970 & \textbf{375} & \gaincell{$\times$5.3} & 449 & 1739 & \textbf{543} & \gaincell{$\times$3.2} & 536 & 3729 & \textbf{1952} & \gaincell{$\times$1.9} \\
[-0.4ex]\midrule\addlinespace[-0.000ex]

\cellcolor{orange!7} & Acc. & 47.7 & 65.3 & \textbf{66.3} & \gaincell{$\uparrow$ 1.0} & 53.0 & 75.0 & \textbf{75.3} & \gaincell{$\uparrow$ 0.3} & 64.7 & 80.3 & \textbf{80.7} & \gaincell{$\uparrow$ 0.4} \\
\cellcolor{orange!7}\textbf{MedQA} & Token & 2134 & 3962 & \textbf{1685} & \gaincell{$\downarrow$ 57.5\%} & 2098 & 4260 & \textbf{1555} & \gaincell{$\downarrow$ 63.5\%} & 1746 & 3444 & \textbf{1841} & \gaincell{$\downarrow$ 46.5\%} \\
\cellcolor{orange!7} & Speed & 236 & 1267 & \textbf{438} & \gaincell{$\times$2.9} & 476 & 1923 & \textbf{928} & \gaincell{$\times$2.1} & 1360 & 4142 & \textbf{1420} & \gaincell{$\times$2.9} \\
[-0.4ex]\midrule\addlinespace[-0.000ex]

\cellcolor{LavenderLight!20} & Acc. & 63.5 & 69.8 & \textbf{73.5} & \gaincell{$\uparrow$ 3.7} & 64.8 & 69.5 & \textbf{74.6} & \gaincell{$\uparrow$ 5.1} & 68.5 & 72.8 & \textbf{75.7} & \gaincell{$\uparrow$ 2.9} \\
\cellcolor{LavenderLight!20}\textbf{MBPP+} & Token & 1634 & 4420 & \textbf{1339} & \gaincell{$\downarrow$ 69.7\%} & 2053 & 3695 & \textbf{1164} & \gaincell{$\downarrow$ 68.5\%} & 1858 & 4971 & \textbf{1621} & \gaincell{$\downarrow$ 67.4\%} \\
\cellcolor{LavenderLight!20} & Speed & 523 & 2148 & \textbf{577} & \gaincell{$\times$3.7} & 1064 & 3628 & \textbf{1275} & \gaincell{$\times$2.8} & 2410 & 8728 & \textbf{2400} & \gaincell{$\times$3.6} \\
[-0.4ex]\midrule\addlinespace[-0.000ex]

\cellcolor{LavenderLight!20} & Acc. & 75.0 & 79.7 & \textbf{79.9} & \gaincell{$\uparrow$ 0.2} & 74.4 & 80.5 & \textbf{80.5} & $\uparrow$ 0.0 & 76.8 & 81.1 & \textbf{86.5} & \gaincell{$\uparrow$ 5.4} \\
\cellcolor{LavenderLight!20}\textbf{HumanEval+} & Token & 2380 & 5987 & \textbf{1775} & \gaincell{$\downarrow$ 70.4\%} & 2507 & 4593 & \textbf{1866} & \gaincell{$\downarrow$ 59.4\%} & 2366 & 5934 & \textbf{2042} & \gaincell{$\downarrow$ 65.6\%} \\
\cellcolor{LavenderLight!20} & Speed & 274 & 1044 & \textbf{350} & \gaincell{$\times$3.0} & 502 & 1619 & \textbf{497} & \gaincell{$\times$3.3} & 1084 & 4062 & \textbf{1285} & \gaincell{$\times$3.2} \\
[-0.4ex]\bottomrule
\end{tabular}
}
\vspace{-5pt}
\end{table*}

\textbf{Lossless Information Transfer.}
The latent working memory transfer mechanism ensures that each succeeding agent in \ours seamlessly receives its predecessor’s complete output without re-encoding. In the following theorem, we further formalize this property to demonstrate that latent working memory transfer guarantees information fidelity equivalent to explicit input exchange.

\begin{theorem}[\textbf{Information Preservation via Latent Working Memory}]\label{PRP:kv-info}
In both latent and text-based reasoning, the outputs of an agent when receiving latent working memory from preceding agents are equivalent to those obtained when directly inputting the preceding agents' outputs.
\end{theorem}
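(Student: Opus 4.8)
The plan is to reduce the statement to the standard correctness property of KV caching in causal decoder-only Transformers: handing an agent a prefilled cache is computationally indistinguishable from having that agent re-process the sequence that produced the cache. The whole argument rests on the fact that, under a causal attention mask, the key and value vectors stored at layer $l$ for any prefix position $p$ are a deterministic function of the input embeddings $e_1, \dots, e_p$ alone and are independent of everything appended after position $p$. I would isolate this as a lemma and prove it by induction on the layer index: at layer $1$ the keys/values at position $p$ are an affine-plus-normalization map of $e_p$; assuming the hidden state $h^{(l-1)}_p$ depends only on $e_1,\dots,e_p$, causal masking forces the layer-$l$ attention output at position $p$ to aggregate only positions $\le p$, so $h^{(l)}_p$ — and hence $K^{(l+1)}_p, V^{(l+1)}_p$ — again depend only on $e_1,\dots,e_p$.

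With this lemma in hand, I would set up the two scenarios explicitly. Write $A_1$'s processed sequence as the input-embedding sequence $\mathbf{e}^{A_1}=(e_1,\dots,e_{t+m})$ (prompt embeddings followed by the $m$ aligned latent-thought embeddings $e=hW_a$), so that $\mathcal{M}_{A_1}$ is exactly the collection $\{K^{(l)}_p, V^{(l)}_p : l\le L,\, p\le t+m\}$. In Scenario 1 agent $A_2$ prepends $\mathcal{M}_{A_1}$ and generates; in Scenario 2 it is directly given $A_1$'s output (the embedding sequence $\mathbf{e}^{A_1}$ in the latent case, or decoded tokens that re-embed to $\mathbf{e}^{A_1}$ in the text case) and re-processes from scratch. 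Since the agents share parameters, the lemma guarantees that the keys and values $A_2$ recomputes at each layer for positions $1,\dots,t+m$ in Scenario 2 coincide entry-for-entry with $\mathcal{M}_{A_1}$, precisely because these prefix entries are insensitive to the suffix that $A_2$ subsequently appends.

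I would then close the argument by a second induction, this time over $A_2$'s own generation: at every generation position the query depends only on positions $\le$ itself (causality), all of which agree across the two scenarios, and each attention output is a fixed function of (query, cached keys, cached values) over an identical combined cache; hence all of $A_2$'s hidden states, and therefore its emitted outputs, coincide. The latent and text cases are handled uniformly, since both reduce to $A_2$ attending over the same embedding-determined cache — the only difference being whether $A_1$'s output is already in embedding form or must be re-embedded losslessly from tokens.

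The step I expect to be the main obstacle is not the cache-determinism lemma itself but the bookkeeping that makes the two scenarios genuinely identical. I must verify that positional information is assigned consistently, so that the relative positions seen by $A_2$'s queries over the prepended cache in Scenario 1 match those of the re-processed sequence in Scenario 2 (this is automatic for relative or rotary schemes but needs an explicit position-offset argument for absolute encodings), and I must make explicit the shared-parameter assumption ($W_K, W_V$ common to $A_1$ and $A_2$ at each layer) under which the prepended keys and values live in the same space that $A_2$'s queries probe. I would state these as the two hypotheses the equivalence rests on and discharge them before invoking the two inductions.
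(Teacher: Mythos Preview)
Your proposal is correct and follows essentially the same approach as the paper: a layer-wise induction showing that the attention output at each layer is a deterministic function of identical keys, values, and queries in both scenarios, hence the final hidden states coincide. Your version is more carefully decomposed---you isolate the causal prefix-determinism lemma, add a second induction over $A_2$'s generation steps, and explicitly flag the positional-encoding and shared-parameter hypotheses---whereas the paper's proof leaves all of this implicit and runs a single induction comparing $h^{(l)}$ to $h'^{(l)}$ directly.
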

Full proof of Theorem \ref{PRP:kv-info} is provided in \ref{app:kv-info}. With lossless information preservation, we transfer latent working memory in KV rather than directly transmitting hidden states to avoid redundant recomputation for the successive agent.

\vspace{-8pt}
\subsection{End-to-End Pipeline with Complexity Analyses}
\vspace{-2pt}
\label{sec:complexity}

For the remaining agents in \ours, we follow the same latent thoughts generation and working memory transfer mechanism described above. 
Specifically, agent $A_3$ inherits the working memory $\mathcal{M}_{A_2}$ from the preceding agent $A_2$, performs auto-regressive last-layer hidden state generation, and subsequently transmits its updated latent working memory $\mathcal{M}_{A_3}$ to the next agent. This process continues across all agents in \ours, with only the last agent decoding the final answer in texts. In the following theorem, we theoretically analyze the overall complexity of \ours.

\begin{theorem}[\textbf{\ours Complexity}]\label{PRP:complexity}
The time complexity for each agent of \ours is $O\big((d_h^2m+d_hm^2+d_htm)L\big)$, where $t$ is the input length of this agent, and $m$ is the length of latent thoughts. In contrast, assuming Theorem~\ref{PRP:express}, the time complexity for each agent of the vanilla text-based MAS needs to be $O\big(\big(d_h^3m\frac1{\log|\CAL V|}+d_h^3m^2\frac1{\log^2|\CAL V|}+d_h^2tm\frac1{\log|\CAL V|}\big)L+d_h^2|\CAL V|m\frac1{\log|\CAL V|}\big)$ to achieve the same expressiveness.
\end{theorem}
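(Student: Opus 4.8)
The plan is to compute the cost of auto-regressive generation layer-by-layer and step-by-step, and then convert the count of latent thoughts into an equivalent text-token count using the lower bound from Theorem~\ref{PRP:express}. First I would isolate the per-step, per-layer cost of a single forward pass during decoding when the KV cache has current length $n$. Projecting the incoming embedding through $W_Q, W_K, W_V$ and through the output projection each costs $O(d_h^2)$, and the position-wise feed-forward block costs $O(d_h^2)$ as well, since its intermediate width is a constant multiple of $d_h$. The only length-dependent piece is attention: the new query is scored against all $n$ cached keys and then mixed with the $n$ cached values, costing $O(n d_h)$. Hence one decoding step at sequence length $n$ costs $O(d_h^2 + n d_h)$ per layer.

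Next I would sum this over the generation process for \ours. Each agent starts from input length $t$ and appends $m$ latent states, so at step $i \in \{1,\dots,m\}$ the cache length is $t+i$. Summing $O(d_h^2 + (t+i)d_h)$ over $i=1,\dots,m$ and over $L$ layers yields $O\big((d_h^2 m + d_h t m + d_h m^2)L\big)$, matching the claimed latent cost. Two bookkeeping points close this direction: the linear alignment operator $W_a \in \mathbb{R}^{d_h\times d_h}$ adds only $O(d_h^2)$ per step, i.e.\ $O(d_h^2 m)$ in total, which is absorbed into the existing $d_h^2 m L$ term; and because latent generation feeds the last hidden state back directly, it never evaluates the language-model head $W_{\text{out}}$, so no $|\CAL V|$-dependent term appears.

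For the text-based baseline I would invoke Theorem~\ref{PRP:express}: to reproduce the same $m$ latent thoughts losslessly, a text trajectory must contain at least $m' = \Omega\big(d_h m / \log|\CAL V|\big)$ tokens. Substituting $m \mapsto m'$ into the generation cost above gives the three in-layer terms, namely $d_h^2 m' \to d_h^3 m/\log|\CAL V|$, $d_h t m' \to d_h^2 t m/\log|\CAL V|$, and $d_h (m')^2 \to d_h^3 m^2/\log^2|\CAL V|$, which reproduces the bracketed expression multiplied by $L$. The remaining additive term comes precisely from the decoding step that latent generation omits: each text token requires one head evaluation $h W_{\text{out}}$ at cost $O(d_h|\CAL V|)$, so over $m'$ tokens this contributes $O\big(d_h|\CAL V| m'\big) = O\big(d_h^2|\CAL V| m/\log|\CAL V|\big)$, the final term in the stated bound.

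The main obstacle is not the algebra but the accounting at the interface between the two cost models: one must argue that the \emph{only} structural differences between latent and text generation are (i) the replacement of the softmax-plus-head decoding step by the cheap $O(d_h^2)$ alignment map, and (ii) the inflation of the effective sequence length from $m$ to $m'$ dictated by Theorem~\ref{PRP:express}. Everything else, including attention, projections, and the feed-forward block, is identical per token, so the comparison cleanly reduces to substituting the token count and adding back the head cost. A secondary subtlety is that the quadratic attention term scales as $(m')^2$, which is why the $1/\log^2|\CAL V|$ factor rather than $1/\log|\CAL V|$ attaches to that single term; keeping the linear and quadratic contributions separate through the substitution is the step most prone to error.
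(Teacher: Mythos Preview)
Your proposal is correct and follows essentially the same approach as the paper: decompose the per-layer, per-step transformer cost into an $O(d_h^2)$ feed-forward/projection part and an $O(n d_h)$ attention part, sum over $m$ generation steps and $L$ layers, then for the text baseline add the $O(d_h|\CAL V|)$ head cost and substitute $m' = \Theta(d_h m/\log|\CAL V|)$ from Theorem~\ref{PRP:express}. Your write-up is in fact slightly more explicit than the paper's (you call out the alignment map $W_a$ and why no $|\CAL V|$ term appears in the latent case), but the underlying argument is identical.
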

The full proof of Theorem \ref{PRP:complexity} is provided in \ref{app:complexity}.
It is worth noting that \ours is agnostic to specific model collaboration strategies and can be seamlessly applied to sequential, hierarchical, or other advanced MAS designs.

\vspace{-6pt}
\section{Empirical Evaluations}

\begin{figure*}[!t]
    \centering
    \includegraphics[width=\linewidth]{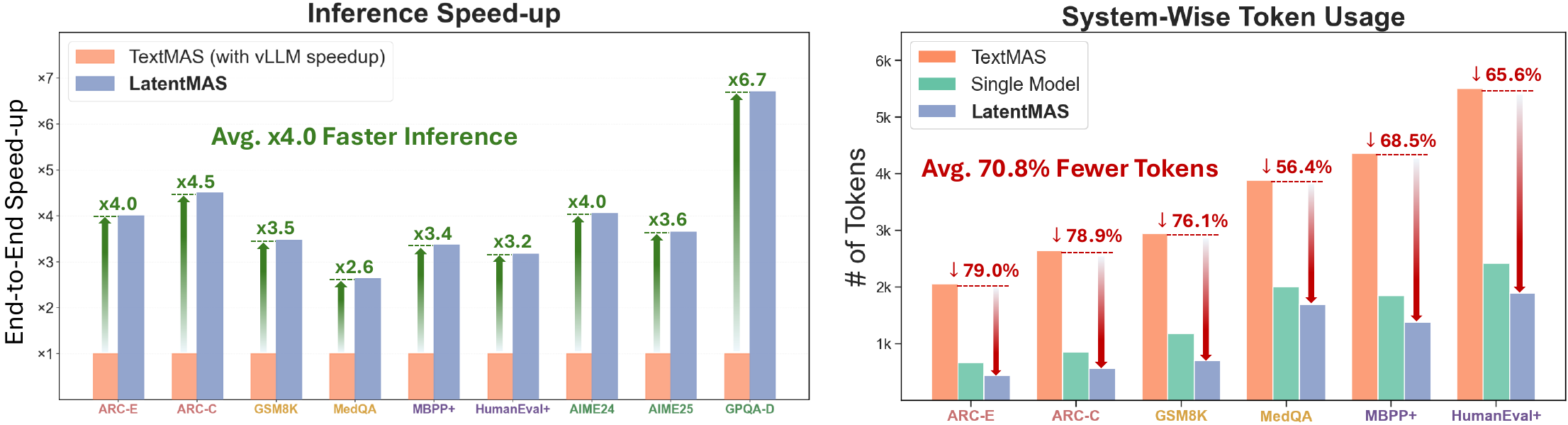}
    \vspace{-10pt}
    \caption{\textbf{Efficiency gains of \ours over single model and TextMAS under the Sequential MAS setting.} \textit{Left:} \ours achieves consistent faster end-to-end inference speed. \textit{Right:} \ours requires fewer system-wise token usage in total.}
    \label{fig:eff_seq}
    \vspace{-10pt}
\end{figure*}

\textbf{Tasks and Datasets.}
We conduct comprehensive evaluations of \ours across 9 benchmarks spanning both general-purpose and reasoning-intensive tasks: (i) \textit{Math \& Science Reasoning}, including GSM8K \citep{gsm8k}, AIME24 \citep{aime24}, AIME25 \citep{aime25}, GPQA-Diamond \citep{gpqa}, and MedQA \citep{medqa}; (ii) \textit{Commonsense Reasoning}, including ARC-Easy \citep{arc-easy} and ARC-Challenge \citep{arc-challenge}; and (iii) \textit{Code Generation}, including MBPP-Plus \citep{codeplus} and HumanEval-Plus \citep{codeplus}. Additional benchmark details are provided in Appendix \ref{app:eva_details}.

\textbf{Models and Baselines.}
We adopt 5 off-the-shelf models from both the Qwen3 family~\citep{yang2025qwen3} (4B, 8B, and 14B) and the Llama 3 family \cite{grattafiori2024llama} (3B, 8B) to construct \ours at different scales.
For baseline comparison, we evaluate \ours against:
(i) \textit{Single LLM agents (Single)}, where a single LLM directly performs standard auto-regressive generation with token-level decoding;
(ii) \textit{Sequential text-based MAS (Sequential TextMAS)}, following the chain-of-agents design \citep{zhang2024chain} with text-mediated reasoning and communication; and
(iii) \textit{Hierarchical text-based MAS (Hierarchical TextMAS)}, where domain-specialized agents collaborate through a summarizer \citep{zhuge2024language} using text-based reasoning and communication. Detailed model and baseline implementations are provided in Appendix \ref{app:impl_details}.

\textbf{Implementation Details.} 
For latent thoughts generation, we compute the realignment matrix $W_a$ once per run and reuse it across all inference steps. Each LLM agent performs $m \in \{0, 10, 20, 40, 80\}$ latent steps during reasoning. 
For working memory transfer, we directly concatenate the KV caches from the immediately preceding agent into the corresponding transformer layers through the \texttt{past\_key\_values} interface in HuggingFace \texttt{Transformers} \citep{huggingface_transformers_docs}.
Besides the HuggingFace implementation, we also integrate all baseline methods and \ours\ with the \texttt{vLLM} backend \citep{kwon2023efficient}, enabling prefix caching and tensor-parallel inference for efficient deployment of larger LLM agents.
We perform hyperparameter tuning and report the mean performance over three independent runs.
Across both baselines and our method, we set all LLM agents with a temperature of 0.6 and a top-$p$ of 0.95. We adjust the maximum output length for each task according to its relative difficulty. We set the maximum length to 2,048 tokens for ARC-Eacy, ARC-Challenge, and GSM8K, 4096 tokens for MedQA, MBPP+, and Humaneval+, 8,192 tokens for GPQA and 20,000 tokens for AIME24 and AIME25.
All experiments are conducted on 8$\times$NVIDIA A100-80G GPUs.

\begin{table*}[!t]
    \centering
    \renewcommand{\arraystretch}{1.16}
    \caption{\textbf{Main results of \ours on 3 reasoning-intensive tasks under Sequential and Hierarchical MAS settings.} We report accuracy, token usage, and end-to-end speed, and highlight the performance gains following the same evaluation protocol as in Table~\ref{tab:seq_mas}.}
    \label{tab:reasoning_intense_res}
    \small
    \resizebox{\textwidth}{!}{
    \begin{tabular}{l|c|cccl|cccl}
        \toprule
        \multirow{2}{*}{\textbf{Tasks}} &
        \multirow{2}{*}{\textbf{Metrics}} &
        \multicolumn{3}{c}{\textbf{Qwen3-8B}} &
        \multirow{2}{*}{\textbf{Improve}} &
        \multicolumn{3}{c}{\textbf{Qwen3-14B}} &
        \multirow{2}{*}{\textbf{Improve}} \\
        \cmidrule(lr){3-5}\cmidrule(lr){7-9}
        & & Single & TextMAS & \ours & & Single & TextMAS & \ours & \\
        \midrule[-0.4ex]\midrule\addlinespace[-0.000ex]
        \rowcolor{gray!16}
        \multicolumn{10}{c}{
            \rule{0pt}{1.1em}
            \textbf{\textit{Sequential MAS Setting}}
            \rule[-0.3em]{0pt}{1.1em}
        } \\
        [-0.4ex]\midrule\addlinespace[-0.000ex]

        \cellcolor{red!5}
        & Acc.  & 50.0 & 53.3 & \textbf{56.7} & \gaincell{$\uparrow$ 3.4}
                 & 63.3 & 63.3 & \textbf{66.7} & \gaincell{$\uparrow$ 3.4} \\
        \cellcolor{red!5}\textbf{AIME24}
        & Token & 12891 & 38596 & \textbf{8953} & \gaincell{$\downarrow$ 76.8\%}
                 & 11263 & 32092 & \textbf{10593} & \gaincell{$\downarrow$ 67.0\%} \\
        \cellcolor{red!5}
        & Speed & 421 & 2808 & \textbf{688} & \gaincell{$\times$4.1}
                & 1018 & 4554 & \textbf{1149} & \gaincell{$\times$4.0} \\
        [-0.4ex]\midrule\addlinespace[-0.000ex]

        \cellcolor{orange!7}
        & Acc.  & 46.7 & 53.3 & \textbf{53.3} &$\uparrow$ 0.0
                 & 56.7 & 60.0 & \textbf{63.3} & \gaincell{$\uparrow$ 3.3} \\
        \cellcolor{orange!7}\textbf{AIME25}
        & Token & 14692 & 45088 & \textbf{8699} & \gaincell{$\downarrow$ 80.7\%}
                 & 11298 & 44618 & \textbf{11402} & \gaincell{$\downarrow$ 74.4\%} \\
        \cellcolor{orange!7}
        & Speed & 450 & 3150 & \textbf{820} & \gaincell{$\times$3.8}
                & 1040 & 5184 & \textbf{1473} & \gaincell{$\times$3.5} \\
        [-0.4ex]\midrule\addlinespace[-0.000ex]

        \cellcolor{LavenderLight!20}
        & Acc.  & 39.9 & 43.4 & \textbf{45.5} & \gaincell{$\uparrow$ 2.1}
                 & 48.5 & 51.5 & \textbf{52.0} & \gaincell{$\uparrow$ 0.5} \\
        \cellcolor{LavenderLight!20}\textbf{GPQA-Diamond}
        & Token & 6435 & 17986 & \textbf{4571} & \gaincell{$\downarrow$ 74.6\%}
                 & 5547 & 12676 & \textbf{5454} & \gaincell{$\downarrow$ 57.0\%} \\
        \cellcolor{LavenderLight!20}
        & Speed & 813 & 5771 & \textbf{854} & \gaincell{$\times$6.8}
                & 1043 & 9714 & \textbf{1475} & \gaincell{$\times$6.6} \\
        [-0.4ex]\midrule\addlinespace[-0.000ex]

        \rowcolor{gray!16}
        \multicolumn{10}{c}{
            \rule{0pt}{1.1em}
            \textbf{\textit{Hierarchical MAS Setting}}
            \rule[-0.3em]{0pt}{1.1em}
        } \\
        [-0.4ex]\midrule\addlinespace[-0.000ex]

        \cellcolor{red!5}
        & Acc.  & 50.0 & 53.3 & {53.3} & $\uparrow$ 0.0
                 & 63.3 & 70.0 & \textbf{73.3} & \gaincell{$\uparrow$ 3.3} \\
        \cellcolor{red!5}\textbf{AIME24}
        & Token & 12891 & 42629 & \textbf{7526} & \gaincell{$\downarrow$ 82.3\%}
                 & 11263 & 29025 & \textbf{10230} & \gaincell{$\downarrow$ 64.8\%} \\
        \cellcolor{red!5}
        & Speed & 421 & 3132 & \textbf{776} & \gaincell{$\times$4.0}
                & 1018 & 5718 & \textbf{1089} & \gaincell{$\times$5.3} \\
        [-0.4ex]\midrule\addlinespace[-0.000ex]

        \cellcolor{orange!7}
        & Acc.  & 46.7 & 50.0 & {50.0} & $\uparrow$ 0.0
                 & 56.7 & 66.7 & {66.7} & $\uparrow$ 0.0 \\
        \cellcolor{orange!7}\textbf{AIME25}
        & Token & 14692 & 53929 & \textbf{13230} & \gaincell{$\downarrow$ 75.5\%}
                 & 11298 & 50003 & \textbf{9527} & \gaincell{$\downarrow$ 80.9\%} \\
        \cellcolor{orange!7}
        & Speed & 450 & 3488 & \textbf{616} & \gaincell{$\times$5.7}
                & 1040 & 6019 & \textbf{1056} & \gaincell{$\times$5.7} \\
        [-0.4ex]\midrule\addlinespace[-0.000ex]

        \cellcolor{LavenderLight!20}
        & Acc.  & 39.9 & 43.0 & \textbf{46.9} & \gaincell{$\uparrow$ 3.9}
                 & 48.5 & 52.0 & \textbf{53.0} & \gaincell{$\uparrow$ 1.0} \\
        \cellcolor{LavenderLight!20}\textbf{GPQA-Diamond}
        & Token & 6435 & 22450 & \textbf{3395} & \gaincell{$\downarrow$ 84.9\%}
                 & 5547 & 20931 & \textbf{3606} & \gaincell{$\downarrow$ 82.8\%} \\
        \cellcolor{LavenderLight!20}
        & Speed & 813 & 6108 & \textbf{798} & \gaincell{$\times$7.7}
                & 1043 & 9119 & \textbf{1458} & \gaincell{$\times$6.3} \\
        [-0.4ex]\bottomrule
    \end{tabular}
    }
    \vspace{-10pt}
\end{table*}

\vspace{-8pt}
\subsection{Main Results}
Tables~\ref{tab:seq_mas}, \ref{tab:reasoning_intense_res}, and \ref{tab:hier_mas} report the overall performance of \ours across 9 general and reasoning-intensive benchmarks built from 5 different scales of LLM backbones. To thoroughly examine collaboration behaviors during inference, we evaluate each method from three complementary perspectives: (i) \textit{task accuracy}, (ii) \textit{system throughput (total output tokens)}, and (iii) \textit{end-to-end inference speed}.
Across all tasks, \ours consistently improves over the single-model baseline by an average of 14.6\% and 13.3\% under the sequential and hierarchical settings, respectively, and further yields gains of 2.8\% and 4.6\% over text-based MAS. Under identical MAS architectures, \ours provides 4$\times$ and 4.3$\times$ faster inference speed on average compared with sequential and hierarchical text-based MAS. Additionally, as the agents' collaboration occurs entirely in latent space, \ours reduces token usage by 70.8\% and 83.7\% relative to sequential and hierarchical TextMAS. We leave additional results on Llama backbones in Appendix \ref{app:llama_latent_MAS}.

\subsection{Efficiency Analyses on Latent Collaboration.} 
\textbf{End-to-end Inference Speed.} As established earlier in Theorem~\ref{PRP:express}, \ours can achieve orders-of-magnitude higher expressiveness, thereby enabling more efficient inference than text-based MAS.
We further empirically validate this advantage through comparisons with TextMAS.
As shown in Figure \ref{fig:eff_seq} and \ref{fig:eff_hier} (left), even after accelerating the TextMAS baselines using the vLLM service, \ours still achieves a 2.6$\times$-7$\times$ speedup over the vLLM-optimized TextMAS. This improvement stems from the substantially reduced number of latent steps required for latent thoughts generation compared with the much larger decoding steps needed for per-token text generation. With fewer than 50 latent steps, \ours attains comparable or even higher performance on reasoning-intensive tasks such as AIME 24/25, whereas TextMAS typically requires more than 20K output tokens to complete full text-based CoT trajectories.

\textbf{Token Throughput.} In addition, as illustrated in Figure \ref{fig:eff_seq} and \ref{fig:eff_hier} (right), \ours reduces token usage by 59.4\%-87.9\% compared with TextMAS, as agents in \ours communicate by directly transferring latent working memory into another agent's internal layers rather than relying on the text medium. \ours also achieves 15.0\%-60.3\% lower token usage than single agents. Compared with single-model reasoning, \ours distributes the input question across collaborating agents, greatly reducing the burden on the final agent, which primarily aggregates preceding latent thoughts and decodes the final answer using only a small number of tokens. As a result, the entire system generates fewer output tokens while still achieving higher accuracy.

\begin{figure*}[!t]
    \centering
    \includegraphics[width=\linewidth]{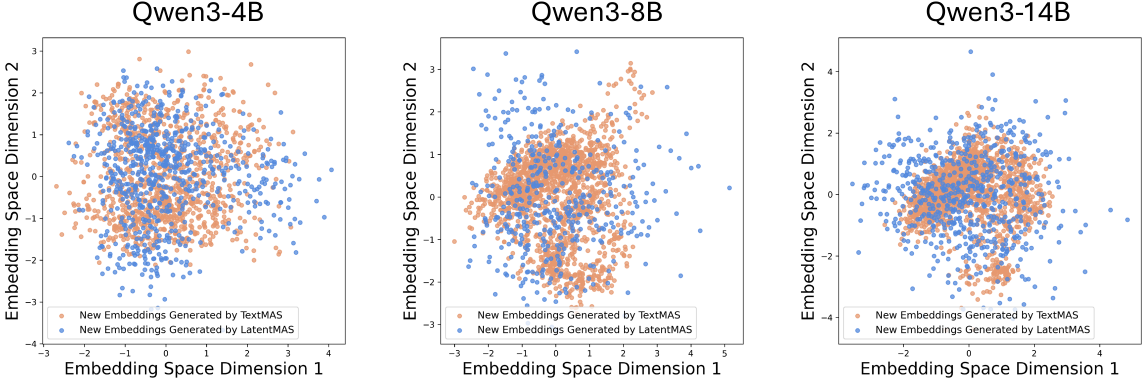}
    \caption{\textbf{Illustration of the semantic meaning encoded by \ours.} \ours's generated latent thoughts embeddings largely cover the embedding space of text-based generated tokens, indicating the semantic consistency and expressive capacity of \ours.}
    \label{fig:latent_step_pattern}
\end{figure*}

\begin{figure*}[!t]
    \centering
    \includegraphics[width=\linewidth]{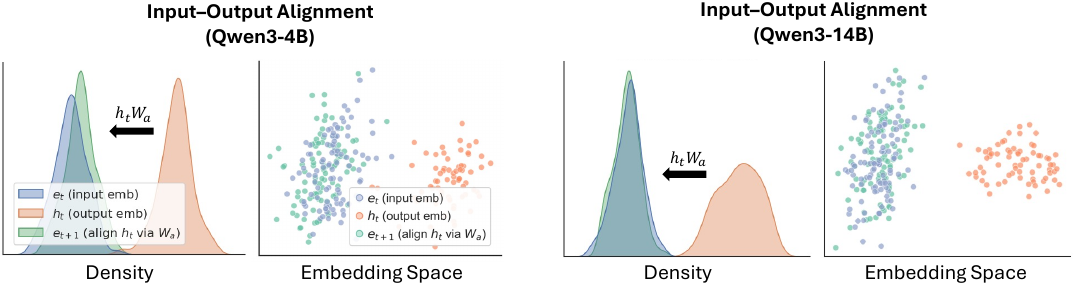}
    \caption{\textbf{Effectiveness of the input-output alignment $W_a$ on MedQA.} Unaligned output embeddings ($h_t$) drift away from the original input embeddings ($e_t$), while the aligned vectors ($e_{t+1}$) realign with $e_t$, demonstrating that $W_a$ preserves embedding-space structure and prevents representation drift.
    Analyses on additional models are provided in Appendix \ref{app:input_output_app}.}
    \label{fig:input_output}

\end{figure*}

\vspace{-6pt}
\section{In-depth Analyses on \ours}

\textbf{Do Latent Thoughts Reflect Text Reasoning?} We first verify whether latent thoughts generation in \ours produces meaningful and semantically expressive representations.
To this end, we compare the distribution of newly generated last-layer embeddings in \ours with the embeddings of token-by-token responses produced by TextMAS. Experiments are conducted on 300 MedQA questions, using 40 latent steps for \ours and a 4096 max-token budget for the TextMAS baseline.

As shown in Figure \ref{fig:latent_step_pattern}, we highlight two key observations: 
(i) The last-layer embeddings from \ours share nearly the same region of the embedding space with the token embeddings from TextMAS, indicating that latent thoughts encode similar semantic representations as the correct text responses.
(ii) The last-layer embeddings from \ours largely cover the distribution of token embeddings from TextMAS, indicating that latent thoughts offer greater diversity and expressive capacity than discrete tokens.
Together, these findings show that latent thoughts not only capture the valid semantics of their corresponding text responses but also encode richer and more expressive representations inside.
We further include a case study in Appendix~\ref{app:case_study} analyzing how LLM agents in \ours interpret their own latent thoughts to provide additional validation.

\textbf{Effectiveness on Input-Output Alignment.} We next empirically evaluate the effectiveness of the input-output alignment in our method design. First, we compare the input vector $e_t$ obtained from the standard token embedding layer with both the newly generated output vector $h_t$ before alignment and the after-aligned vector $e_{t+1}$. 
As shown in Figure \ref{fig:input_output}, we visualize the three embedding vectors by comparing both density distributions and geometric relationships in the projected embedding space. We observe that the new $h_t$ deviates largely from the original input embedding $e_t$. After applying $W_a$, the aligned vector $e_{t+1}$ realigns with $e_t$, indicating that $W_a$ effectively restores the geometric and statistical structure of the input embedding space and mitigates representation drift across iterative latent steps. In Figure \ref{fig:ablation_input_output_downstream}, across 3 different tasks, we further compare downstream performance before and after applying $W_a$ and observe consistent accuracy gains of 2.3\%-5.3\% by $W_a$.

\begin{figure}[!t]
    \centering
    \includegraphics[width=0.9\linewidth]{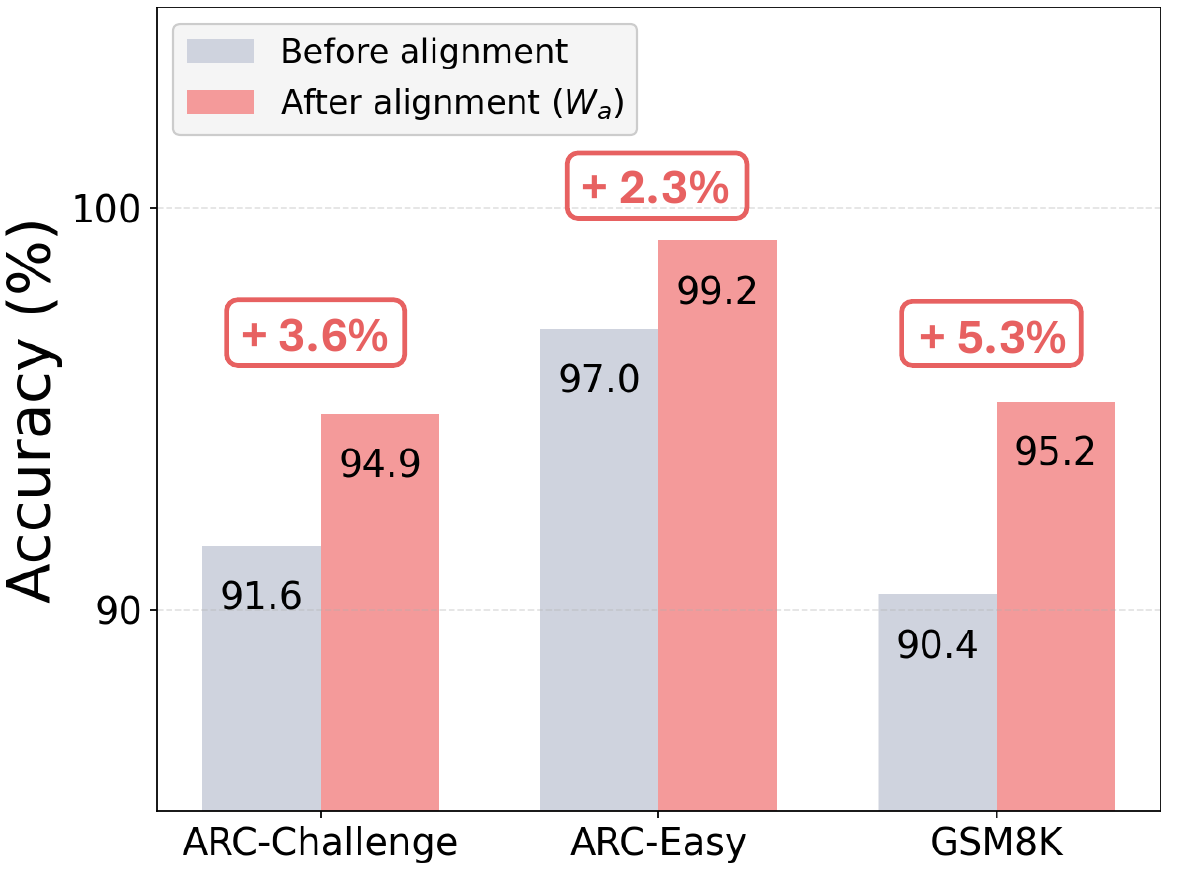}
    \caption{Downstream accuracy performance (\%) on 3 benchmarks before/after applying the input-output alignment $W_a$.}
    \label{fig:ablation_input_output_downstream}
    \vspace{-16pt}
\end{figure}

\textbf{Optimal Latent Step Depth.} To understand how many latent steps are needed for optimal performance in \ours, we analyze the effect of increasing latent step depth across three downstream tasks. 
As shown in Figure \ref{fig:ablation_latent_steps}, increasing the number of latent steps generally improves downstream performance, indicating that additional latent thoughts enhance collaborative expressiveness. 
Across the three tasks on Qwen3-14B, we find that accuracy steadily rises and peaks around 40-80 steps.
Beyond this range, performance plateaus or declines, suggesting that excessive latent thought generation may introduce redundant or less useful information. 
Based on this observation, we adopt a moderate latent step budget within this range in practice, as it consistently provides the best accuracy-efficiency trade-off without requiring any task-specific training procedures.

\begin{figure}[!t]
    \centering
    \includegraphics[width=\linewidth]{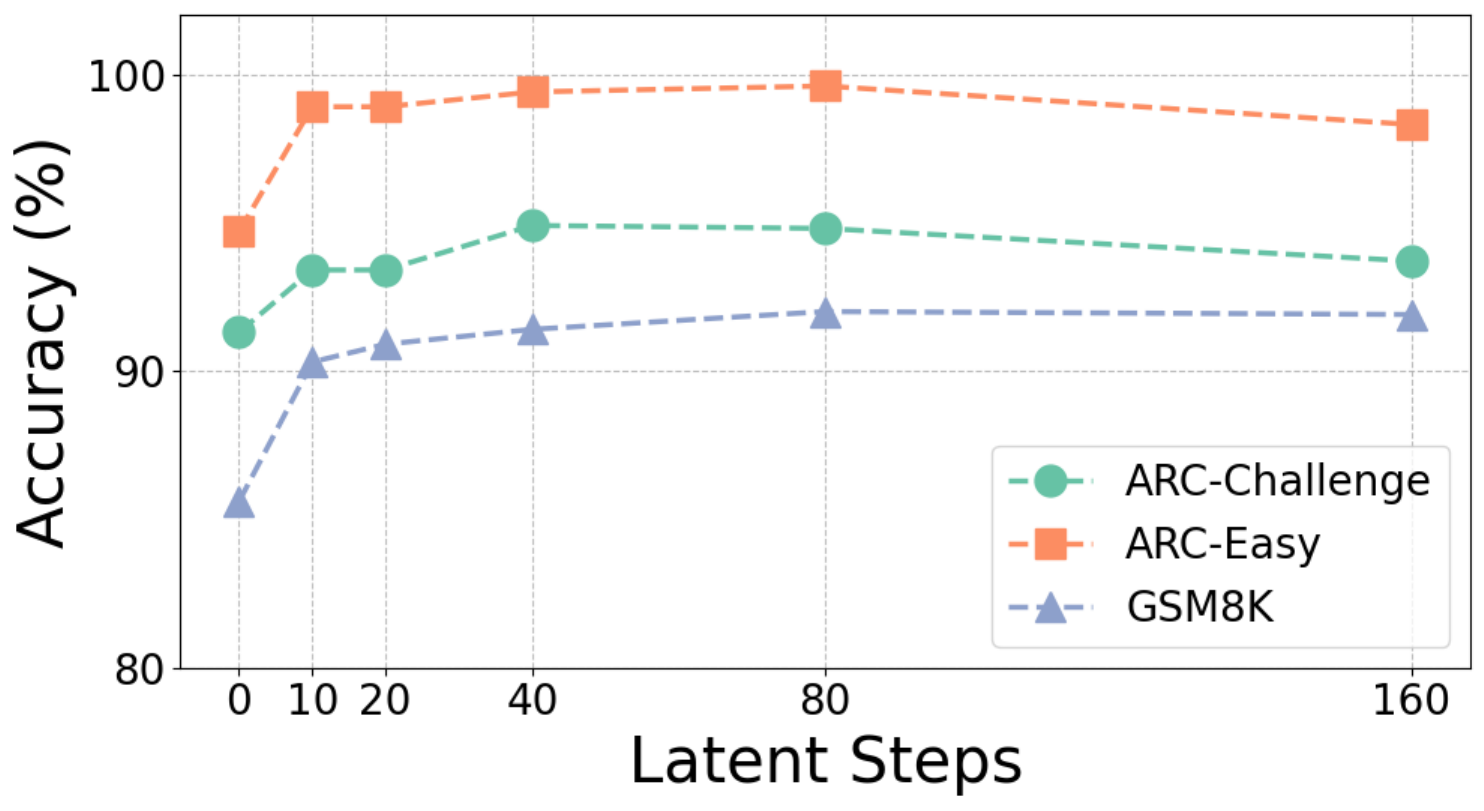}
    \caption{Effectiveness of different latent step depths.}
    \label{fig:ablation_latent_steps}
    \vspace{-15pt}
\end{figure}

\textbf{Intermediate Error Tracking within \ours.} To better audit and interpret the latent collaboration process, we extend our original framework with an additional \textit{``debug mode''} to enable a flexible conversion between inner latent thoughts and outer human-readable texts within each agent. We leave implementation details on how to converse to the debug mode in \ours, together with additional detailed analyses and case studies, to Appendix \ref{app:add_exps},\ref{app:debug}, and \ref{app:case_study}.

\section{Related Work}

\textbf{LLM-based Multi-agent Systems.}
Recent studies in Agentic AI have extended classical multi-agent systems \citep{hong2023metagpt, park2023generative, yang2024llm} grounded in traditional reinforcement learning and policy coordination, to modern LLM settings \citep{li2025comprehensive, tan2025systemic}, enabling models to operate as autonomous agents that collaborate in reasoning, planning, and problem-solving \citep{tao2024magis, wang2025talk, zhao2025sirius}. Early methods \citep{yao2022react, wu2024autogen, li2023camel} coordinate multiple LLMs through explicit dialogue and role assignment. Additional methods introduce structured communication protocols to enhance cooperation efficiency \citep{ye2025kvcomm} and emergent specialization \citep{mieczkowski2025predicting, huang2025many} among agents.
Building upon these advanced text-MAS methods, our work aims to construct a multi-agent collaboration system entirely in latent space, treating agents as tightly integrated components to achieve more expressive and efficient coordination.

\textbf{Model Collaboration in Latent Space.}
Recent studies on model collaboration \citep{tran2025multi} have gradually shifted from text-level coordination toward interaction directly in latent representations. Instead of exchanging information through texts, these works \cite{coda2025exploring,ye2025kvcomm,fu2025cache,zheng2025thought} enable models to communicate through model internal states such as hidden representations \cite{coda2025exploring} for efficiency \citep{kong2026}. Most existing methods rely on auxiliary alignment mechanisms and typically require additional training to facilitate effective cross-model communication.
On the other hand, \ours is a training-free latent multi-agent system in which each agent reasons and communicates via native latent representations. 

We leave additional related works on latent reasoning and agent collaboration in Appendix \ref{app:related_works}.

\section{Conclusion}
We introduce \ours, a training-free framework that enables multi-agent systems to collaborate entirely within latent space. Our framework enables latent auto-regressive reasoning with a lossless latent working-memory transfer mechanism. Theoretical analyses and empirical evaluations across diverse reasoning and generation benchmarks demonstrate that \ours consistently improves accuracy performance, token usage, and decoding speed over strong single-model and text-based MAS baselines. 
\ours establishes a scalable and general paradigm for agentic collaboration beyond the constraints of natural language.

\vspace{-5pt}
\section*{Acknowledgment}

The authors thank members of the iDEA-iSAIL Lab and the Zou Group for helpful discussions and feedback on this work.
This work is supported by National Science Foundation under Award No. IIS-2433308 and Award No. IIS-2117902. The views and conclusions are those of the authors and should not be interpreted as representing the official policies of the funding agencies or the government.

\vspace{-5pt}
\section*{Impact Statement}
This paper introduces \ours, a framework for multi-agent collaboration in the latent space, aiming to improve reasoning performance while reducing inference cost. 
The methods and findings in this work are intended to advance research in efficient and scalable machine learning systems. We do not anticipate immediate negative societal impacts beyond those commonly associated with deploying more capable and efficient language model systems.

\bibliography{icml_main}
\bibliographystyle{icml2026}

\newpage 
\appendix
\onecolumn

\renewcommand{\contentsname}{\Large Table of Contents}
{\hypersetup{linkcolor=black}
\tableofcontents
}
\addtocontents{toc}{\protect\setcounter{tocdepth}{2}}

\newpage
\textbf{\Large Appendix}

\section{Input-Output Alignment in \ours}
\label{app:input_output}

\subsection{Theoretical Justification on \texorpdfstring{$W_a$}{Wa}}
\label{app:alignment_matrix_proof}

In this section, we outline the theoretical justification for how $W_a$ minimizes the distributional gap between the distribution of token embeddings and the distribution of aligned embeddings.

Let $P_e$ and $P_h$ be the distribution of token embeddings $e$ and the hidden embeddings $h$, respectively. We assume that $P_e$ and $P_h$ can be generated by $e=\sum_{x\in\CAL V}\sigma_x W_{\text{in},x}$ and $h=\beta\sum_{x\in\CAL V}\sigma_x W_{\text{out},x}$, respectively, where each semantic $\sigma\in\mathbb R^{\CAL V}$ is represented as a weight vector over tokens $\CAL V$ with $\|\sigma\|_2\le1$ and follows an underlying discrete semantic distribution $\sigma\sim P_{\sigma}$, and $\beta>0$ characterizes the scale of output embeddings $h$. In practice, we calculate the scale $\beta$ by $\frac{\CAL V\|h\|}{\sum_{x\in\CAL V}\|W_{\text{in},x}\|}$.
For an alignment matrix $W_a$, the aligned embedding distribution $P_{\hat e,W_a}$ is
\begin{align}
P_{\hat e,W_a}:\;\hat e=hW_a,\quad h\sim P_h.
\end{align}
Our goal is to minimize the distance between the aligned embedding distribution $P_{\hat e,W_a}$ and the token embedding distribution $P_e$, which we measure via the Wasserstein distance:
\begin{align}
d_\textnormal{Wasserstein}(P_{\hat e,W_a},P_e):=\inf_{\gamma \in \Gamma(P_e, P_{\hat e,W_a})} \sqrt{\mathop{\Exp}_{(\hat{e},e) \sim \gamma}[\|\hat{e} - e\|_2^2]},
\end{align}
where $\Gamma(P_{\hat e,W_a},P_e)$ is the set of all couplings of $P_e$ and $P_{\hat e,W_a}$.

\begin{theorem}[\textbf{Upper Bound on Distribution Alignment}]\label{alignment-justification}
For any alignment matrix $W_a$, the Wasserstein distance between $P_e$ and $P_{\hat e,W_a}$ is upper bounded by
\begin{align}
d_\textnormal{Wasserstein}(P_{\hat e,W_a},P_e)\le \|\beta W_{\textnormal{out}} W_a - W_{\textnormal{in}}\|_F.
\end{align}

As we show in Appendix~\ref{app:alignment_matrix_solve}, our choice of $W_a$ (Equation~\ref{eq:alignment}) minimizes this upper bound of $W(P_{\hat e,W_a},P_e)$.
\end{theorem}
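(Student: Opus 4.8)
The plan is to avoid computing the infimum in the Wasserstein distance directly and instead exhibit one explicit admissible coupling of $P_e$ and $P_{\hat e,W_a}$: since $d_\textnormal{Wasserstein}$ is an infimum over couplings, the transport cost of any single coupling is automatically an upper bound. The coupling I would use is the \emph{synchronous} one induced by the shared token: sample a single $x\sim P_{\mathcal V}$ and output the pair $(e,\hat e)=(W_{\text{in},x},\,W_{\text{out},x}W_a)$. First I would check admissibility by computing marginals: the $e$-marginal is the law of $W_{\text{in},x}$, which is $P_e$ by definition, and the $\hat e$-marginal is the law of $hW_a$ with $h=W_{\text{out},x}\sim P_h$, which is exactly $P_{\hat e,W_a}$. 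The hypotheses that the rows of $W_\textnormal{in}$ and $W_\textnormal{out}$ are mutually distinct and that $W_a$ is non-singular are used here only to ensure these pushforwards are genuinely supported on $|\mathcal V|$ distinct atoms, so that the construction is a bona fide transport plan; the inequality below does not otherwise depend on them.

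Substituting this coupling into the definition and squaring gives
\begin{equation}
d_\textnormal{Wasserstein}(P_{\hat e,W_a},P_e)^2 \;\le\; \mathop{\Exp}_{x\sim P_{\mathcal V}}\big\|W_{\text{out},x}W_a-W_{\text{in},x}\big\|_2^2 \;=\; \sum_{x\in\mathcal V}P_{\mathcal V}(x)\,\big\|W_{\text{out},x}W_a-W_{\text{in},x}\big\|_2^2 .
\end{equation}
The structural fact that makes the right-hand side collapse is that $W_{\text{out},x}W_a-W_{\text{in},x}$ is exactly the $x$-th row of the matrix $W_\textnormal{out}W_a-W_\textnormal{in}$, so each squared term above is precisely one row's contribution to the Frobenius norm.

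I would then discard the weights using $P_{\mathcal V}(x)\le 1$ for every $x$, which bounds the probability-weighted sum by the full unweighted sum of squared row norms:
\begin{equation}
\sum_{x\in\mathcal V}P_{\mathcal V}(x)\,\big\|W_{\text{out},x}W_a-W_{\text{in},x}\big\|_2^2 \;\le\; \sum_{x\in\mathcal V}\big\|W_{\text{out},x}W_a-W_{\text{in},x}\big\|_2^2 \;=\; \big\|W_\textnormal{out}W_a-W_\textnormal{in}\big\|_F^2 ,
\end{equation}
and taking square roots delivers the claimed bound.

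The argument is short because its only real content is the choice of the synchronous coupling; the lone (and minor) obstacle is the bookkeeping that turns a $P_{\mathcal V}$-weighted sum of squared row norms into the unweighted Frobenius norm, accomplished by the crude bound $P_{\mathcal V}(x)\le 1$. I would finally remark that this same crude bound is what renders the estimate independent of the token distribution, so that minimizing its right-hand side over $W_a$ reduces to the least-squares problem $\min_{W_a}\|W_\textnormal{out}W_a-W_\textnormal{in}\|_F^2$ solved in Appendix~\ref{app:alignment_matrix_solve}, tying the bound to the ridge-regression choice of $W_a$ in Equation~\ref{eq:alignment}.
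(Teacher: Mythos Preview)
Your proposal is correct and follows essentially the same approach as the paper: both construct the synchronous coupling induced by the shared token $x\sim P_{\mathcal V}$, verify its marginals, compute the resulting transport cost as a $P_{\mathcal V}$-weighted sum of squared row norms of $W_{\text{out}}W_a-W_{\text{in}}$, and then drop the weights via $P_{\mathcal V}(x)\le 1$ to reach the Frobenius norm. The paper writes the coupling explicitly as a joint mass function and checks the marginals by summing indicators, while you describe it probabilistically as a pushforward, but these are the same object and the same argument.
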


\begin{proof}
Consider the following joint distribution $\gamma^*(\hat e,e)$:
\AL{
\gamma^*(\hat e,e):=\sum_{\sigma\in\OP{supp}(P_\sigma)}P_\sigma(\sigma)1_{[\beta\sum_{x\in\mathcal V}\sigma_x W_{\text{out},x}W_a=\hat e]}1_{[\sum_{x\in\mathcal V}\sigma_x W_{\text{in},x}=e]}.
}
Note that for every $\hat e$,
\AL{
\sum_{e\in\OP{supp}(P_e)}\gamma^*(\hat e,e)
&=\sum_{e\in\OP{supp}(P_e)}\sum_{\sigma\in\OP{supp}(P_\sigma)}P_\sigma(\sigma)1_{[\beta\sum_{x\in\mathcal V}\sigma_x W_{\text{out},x}W_a=\hat e]}1_{[\sum_{x\in\mathcal V}\sigma_x W_{\text{in},x}=e]}
\\&=\sum_{\sigma\in\OP{supp}(P_\sigma)}P_\sigma(\sigma)1_{[\beta\sum_{x\in\mathcal V}\sigma_x W_{\text{out},x}W_a=\hat e]}\sum_{e\in\OP{supp}(P_e)}1_{[\sum_{x\in\mathcal V}\sigma_x W_{\text{in},x}=e]}
\\&=\sum_{\sigma\in\OP{supp}(P_\sigma)}P_\sigma(\sigma)1_{[\beta\sum_{x\in\mathcal V}\sigma_x W_{\text{out},x}W_a=\hat e]}1_{[\sum_{x\in\mathcal V}\sigma_x W_{\text{in},x}\in\OP{supp}(P_e)]}
\\&=\sum_{\sigma\in\OP{supp}(P_\sigma)}P_\sigma(\sigma)1_{[\beta\sum_{x\in\mathcal V}\sigma_x W_{\text{out},x}W_a=\hat e]}
\\&=P_{\hat e,W_a}(\hat e)
;}
and that for every $e$,
\AL{
\sum_{\hat e\in\OP{supp}(P_{\hat e,W_a})}\gamma^*(\hat e,e)
&=\sum_{\hat e\in\OP{supp}(P_{\hat e,W_a})}\sum_{\sigma\in\OP{supp}(P_\sigma)}P_\sigma(\sigma)1_{[\beta\sum_{x\in\mathcal V}\sigma_x W_{\text{out},x}W_a=\hat e]}1_{[\sum_{x\in\mathcal V}\sigma_x W_{\text{in},x}=e]}
\\&=\sum_{\sigma\in\OP{supp}(P_\sigma)}P_\sigma(\sigma)1_{[\sum_{x\in\mathcal V}\sigma_x W_{\text{in},x}=e]}\sum_{\hat e\in\OP{supp}(P_{\hat e,W_a})}1_{[\beta\sum_{x\in\mathcal V}\sigma_x W_{\text{out},x}W_a=\hat e]}
\\&=\sum_{\sigma\in\OP{supp}(P_\sigma)}P_\sigma(\sigma)1_{[\sum_{x\in\mathcal V}\sigma_x W_{\text{in},x}=e]}1_{[\beta\sum_{x\in\mathcal V}\sigma_x W_{\text{out},x}W_a\in\OP{supp}(P_{\hat e,W_a})]}
\\&=\sum_{\sigma\in\OP{supp}(P_\sigma)}P_\sigma(\sigma)1_{[\sum_{x\in\mathcal V}\sigma_x W_{\text{in},x}=e]}
\\&=P_e(e)
.}
This implies $\gamma^*\in\Gamma(P_{\hat e,W_a},P_e)$. Since $\sigma$ is a distribution over tokens, and $\|\cdot\|_2^2$ is a convex function. Finally, since $\|\sigma\|_2\le$ for all semantic $\sigma\in\OP{supp}(P_\sigma)$, we have
\AL{
d_\textnormal{Wasserstein}(P_{\hat e,W_a},P_e)&=\inf_{\gamma \in \Gamma(P_e, P_{\hat e,W_a})} \sqrt{\mathop{\Exp}_{(\hat{e},e) \sim \gamma}[\|\hat{e} - e\|_2^2]}
\\&\le\sqrt{\mathop{\Exp}_{(\hat{e},e) \sim \gamma^*}[\|\hat{e} - e\|_2^2]}
\\&=\sqrt{\sum_{\hat e\in\OP{supp}(P_{\hat e,W_a})}\sum_{e\in\OP{supp}(P_e)}\gamma^*(\hat e,e)\|\hat{e} - e\|_2^2}
\\&=\sqrt{\sum_{\hat e\in\OP{supp}(P_{\hat e,W_a})}\sum_{e\in\OP{supp}(P_e)}\sum_{\sigma\in\OP{supp}(P_\sigma)}P_\sigma(\sigma)1_{[\beta \sum_{x\in\mathcal V}\sigma_x W_{\text{out},x}W_a=\hat e]}1_{[\sum_{x\in\mathcal V}\sigma_x W_{\text{in},x}=e]}\|\hat{e} - e\|_2^2}
\\&=\sqrt{\sum_{\sigma\in\OP{supp}(P_\sigma)}P_\sigma(\sigma)\sum_{\hat e\in\OP{supp}(P_{\hat e,W_a})}\sum_{e\in\OP{supp}(P_e)}1_{[\beta \sum_{x\in\mathcal V}\sigma_x W_{\text{out},x}W_a=\hat e]}1_{[\sum_{x\in\mathcal V}\sigma_x W_{\text{in},x}=e]}\|\hat{e} - e\|_2^2}
\\&=\sqrt{\sum_{\sigma\in\OP{supp}(P_\sigma)}P_\sigma(\sigma)\bigg\|\beta \sum_{x\in\mathcal V}\sigma_x W_{\text{out},x}W_a-\sum_{x\in\mathcal V}\sigma_x W_{\text{in},x}\bigg\|_2^2}
\\&=\sqrt{\sum_{\sigma\in\OP{supp}(P_\sigma)}P_\sigma(\sigma)\bigg\|\sum_{x\in\mathcal V}\sigma_x(\beta W_{\text{out},x}W_a-W_{\text{in},x})\bigg\|_2^2}
\\&=\sqrt{\sum_{\sigma\in\OP{supp}(P_\sigma)}P_\sigma(\sigma)\|\sigma(\beta W_{\text{out}}W_a-W_{\text{in}})\|_2^2}
\\&\le\sqrt{\sum_{\sigma\in\OP{supp}(P_\sigma)}P_\sigma(\sigma)\|\sigma\|_2^2\|\beta W_{\text{out}}W_a-W_{\text{in}}\|_2^2}
\\&\le\sqrt{\sum_{\sigma\in\OP{supp}(P_\sigma)}P_\sigma(\sigma)\|\beta W_{\text{out}}W_a-W_{\text{in}}\|_2^2}
\\&\le\sqrt{\sum_{\sigma\in\OP{supp}(P_\sigma)}P_\sigma(\sigma)\|\beta W_{\text{out}}W_a - W_{\text{in}}\|_F^2}
\\&=\sqrt{\|\beta W_{\text{out}}W_a - W_{\text{in}}\|_F^2}
\\&=\|\beta W_{\text{out}}W_a - W_{\text{in}}\|_F
.}
\end{proof}

\subsection{Solving the Alignment Matrix \texorpdfstring{$W_a$}{Wa}}
\label{app:alignment_matrix_solve}

In Section~\ref{sec:latent_reasoning}, we put the last-layer hidden states $h$ back to the input sequence to enable the model's latent reasoning.
However, since the $h$ is not perfectly aligned with the input embedding space, directly feeding $h$ into shallow layers may lead to out-of-distribution activation patterns inside LLMs. 
To mitigate this in a training-free way, we seek a matrix $W_a$ which maps $h$ to a valid input space (i.e., $e=h W_a$). 
A straightforward way to calculate $W_a$ is to enforce that the aligned latent vector $e$ behaves similarly to a real input embedding when it enters the model. 
Motivated by our Theorem~\ref{alignment-justification} below, this corresponds to the following minimization problem:
\begin{equation}
\min_{W_a} \| \beta W_{\text{out}} W_a - W_{\text{in}} \|_F^2.
\end{equation}
This objective is quadratic in $W_a$, so we can derive a closed-form solution by setting its derivative to zero, which yields the normal equation:
\begin{equation}
\beta W_{\text{out}}^\top W_{\text{out}} W_a - W_{\text{out}}^\top W_{\text{in}}=0.
\end{equation}
Solving for $W_a$ gives:
\begin{equation}
W_a = \frac1\beta \big(W_{\text{out}}^\top W_{\text{out}}\big)^{-1} W_{\text{out}}^\top W_{\text{in}}.
\end{equation}
For numerical stability, we further add a small hyperparameter $\lambda>0$ to obtain a ridge regression solution \cite{hoerl1970ridge}:
\begin{equation}
W_a = \frac1\beta \big(W_{\text{out}}^\top W_{\text{out}} + \lambda I\big)^{-1} W_{\text{out}}^\top W_{\text{in}},
\end{equation}
which we compute once and reuse for all latent reasoning steps. Here, $\lambda>0$ is a small hyperparameter to approximate the pseudo-inverse $W_{\text{out}}^\dagger=\lim_{\lambda\searrow0}\big(W_{\text{out}}^\top W_{\text{out}} + \lambda I\big)^{-1} W_{\text{out}}^\top$.

\newpage

\section{Theoretical Analysis}

\subsection{Proof of Theorem~\ref{PRP:express}}
\label{app:express}

\begin{ASS}[Linear Representation Hypothesis; \citealp{park2023linear}]\label{ASS:lin-rep}
We assume that the hidden embeddings $h$ are linear combinations $\sum_{i=1}^{d_h}c_is_i$ of an underlying semantic basis $\{s_1,\dots,s_{d_h}\}\subset\BB R^{d_h}$ (linearly independent) with ternary coefficients $c_1,\dots,c_{d_h}\in\{0,\pm1\}$, where $c_i=0$ represents that $h$ does not have semantic $i$, and $c_i=\pm1$ represents that $h$ has semantic $i$ in a positive/negative way.
\end{ASS}

\begin{theorem}[\textbf{Restate of Theorem \ref{PRP:express}}]
\label{PRP:express_app}

Under the Linear Representation Hypothesis on $h$, if the sequence of all latent thoughts with length $m$ can be expressed losslessly through corresponding text-based reasoning, then the length of text (in tokens) needs to be at least 
$\Omega\big(d_hm/\log|\CAL V|\big),$
where $|\mathcal{V}| > 1$ denotes the vocabulary size.
\end{theorem}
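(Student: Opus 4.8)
The plan is to prove the lower bound through an information-theoretic counting (pigeonhole) argument that compares the number of distinct configurations realizable as latent thought sequences against the number realizable as text traces of a given length. No analytic estimates are needed; the entire argument reduces to a cardinality inequality followed by a logarithmic rearrangement.

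First I would count the latent side. Under Assumption~\ref{ASS:lin-rep}, each latent thought is $h=\sum_{i=1}^{d_h}c_is_i$ with ternary coefficients $(c_1,\dots,c_{d_h})\in\{0,\pm1\}^{d_h}$. Since the semantic basis $\{s_1,\dots,s_{d_h}\}\subset\BB R^{d_h}$ is linearly independent, the map sending a coefficient vector to its embedding $h$ is injective, so distinct ternary patterns yield distinct latent thoughts and there are exactly $3^{d_h}$ realizable values of $h$. A sequence of $m$ latent thoughts therefore ranges over $(3^{d_h})^m=3^{d_h m}$ distinct configurations. On the text side, a reasoning trace of $n$ tokens drawn from the vocabulary $\CAL V$ can assume at most $|\CAL V|^n$ distinct values.

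Next I would formalize ``expressed losslessly'' as the existence of an injection from the set of length-$m$ latent sequences into the set of length-$n$ text sequences: every distinct latent configuration must be uniquely recoverable from its text encoding. By the pigeonhole principle such an injection can exist only if $|\CAL V|^n\ge 3^{d_h m}$. Taking logarithms gives $n\log|\CAL V|\ge (\log 3)\,d_h m$, hence $n\ge (\log 3)\,d_h m/\log|\CAL V|=\Omega\big(d_h m/\log|\CAL V|\big)$, which is exactly the claimed bound.

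The main obstacle is the faithful justification of the two counting steps rather than any hard computation. On the latent side, the delicate point is invoking linear independence of the semantic basis to guarantee injectivity of the coefficient-to-embedding map, ensuring that no two ternary patterns collapse onto the same $h$ and that the count is exactly $3^{d_h}$ rather than something smaller; without linear independence the exponential-in-$d_h$ count could degrade. On the text side, one must be careful to interpret ``lossless'' as genuine injectivity (exact recoverability of the full latent sequence) rather than approximate reconstruction, since the entire strength of the bound rests on that distinction. Once both cardinalities are pinned down, the inequality $|\CAL V|^n\ge 3^{d_h m}$ and its logarithmic rearrangement are routine.
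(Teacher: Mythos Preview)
Your proposal is correct and follows essentially the same approach as the paper: count the $3^{d_h m}$ latent configurations via linear independence of the semantic basis, compare against the $|\CAL V|^n$ text traces, and take logarithms. The only cosmetic difference is that the paper phrases losslessness as a surjection from $\CAL V^{m'}$ onto $\CAL H^m$ whereas you phrase it as an injection in the other direction, but both yield the identical cardinality inequality $|\CAL V|^n\ge 3^{d_h m}$.
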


\begin{proof}[Proof of Theorem~\ref{PRP:express}]
Under Assumption~\ref{ASS:lin-rep}, the set $\CAL H$ of hidden embeddings is
\AL{\CAL H=\bigg\{\sum_{i=1}^{d_h}c_is_i:c_1,\dots,c_{d_h}\in\{0,\pm1\}\bigg\},}
where $\{s_1,\dots,s_{d_h}\}\subset\BB R^{d_h}$ is the underlying semantic basis. Then, the set of length-$t$ latent reasoning sequences is $\CAL H^m$. Since the semantic basis is linearly independent, the size of the set $\CAL H$ of hidden embeddings is 
\AL{|\CAL H|={|\{0,\pm1\}|}^{|\{s_1,\dots,s_{d_h}\}|}=3^{d_h}.}
Thus, the size of the set of length-$m$ latent reasoning sequences is
\AL{|\CAL H^m|=|\CAL H|^m=(3^{d_h})^m=3^{d_hm}.}
To represent the set $\CAL H^m$ of length-$m$ latent reasoning sequences via the set $\CAL V^{m'}$ of length-$m'$ text-based reasoning sequences losslessly, there needs to exist an surjective map from $\CAL V^{m'}$ to $\CAL H^m$, which implies that $|\CAL V^{m'}|\ge|\CAL H^m|$. Therefore,
\AL{
m'&=\log_{|\CAL V|}(|\CAL V|^{m'})=\log_{|\CAL V|}|\CAL V^{m'}|
\\&\ge\log_{|\CAL V|}|\CAL H^m|=\log_{|\CAL V|}(3^{d_hm})
\\&=\frac{d_hm\log3}{\log|\CAL V|}=\Omega\Big(\frac{d_hm}{\log|\CAL V|}\Big)
.\qedhere}
\end{proof}

\subsection{Proof of Theorem~\ref{PRP:kv-info}}
\label{app:kv-info}

\begin{theorem}[\textbf{Restate of Theorem \ref{PRP:kv-info}}]\label{PRP:kv-info_app}
In both latent and text-based reasoning, the outputs of an agent when receiving latent working memory from preceding agents are equivalent to those obtained when directly inputting the preceding agents' outputs.
\end{theorem}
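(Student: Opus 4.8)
The plan is to reduce the claim to the standard correctness property of the KV cache in a decoder-only Transformer: under causal self-attention, the computation at any position depends on the positions preceding it \emph{only} through their layer-wise key and value vectors. I would first make this precise. Fix a layer $l$ and write the attention output at a query position $p$ as $\sum_{j \le p} \alpha^{(l)}_{pj}\, V^{(l)}_j$, where $\alpha^{(l)}_{pj}=\mathrm{softmax}_j\big(Q^{(l)}_p (K^{(l)}_j)^\top/\sqrt{d_h}\big)$. This is a function of the prefix $j<p$ purely through the pairs $(K^{(l)}_j, V^{(l)}_j)$. Since the residual connections, MLP blocks, and the output head all act position-wise, the entire forward pass on any suffix is determined by the suffix's own input embeddings together with the prefix's per-layer keys and values, and never through the prefix's hidden states directly.

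Next I would identify the two quantities the theorem compares. In the \emph{direct-input} scenario, agent $A_2$ processes the concatenation of the preceding agent's output with its own content as a single sequence; its per-position keys and values over the prefix are exactly those produced by $A_1$ during $A_1$'s own forward pass, provided that (i) the agents share the projection matrices $W_K^{(l)}, W_V^{(l)}$ at every layer, and (ii) the prepended content is assigned position indices $1,\dots,t+m$ so that the positional encoding matches the in-place layout. In the \emph{working-memory} scenario, $\mathcal{M}_{A_1}$ supplies precisely these $(K^{(l)}, V^{(l)})$ for all $L$ layers. Combining this with the first step, the suffix computation---and hence $A_2$'s output---is identical in the two scenarios.

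Finally I would observe that the decomposition in the first step is agnostic to how the input embeddings are produced: in text-based reasoning they arise from $W_{\text{in}}$ applied to tokens, while in latent reasoning they arise from the aligned hidden states $e = h W_a$, but in neither case does the attention identity use their specific form. The same argument therefore establishes the equivalence in both regimes, which supplies the ``in both latent and text-based reasoning'' clause of the statement.

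The step I expect to be the main obstacle is the second one: rigorously arguing that the prefix's keys and values coincide across the two scenarios. This is where the shared-backbone assumption and the consistency of position indices are essential, and it is also where the design choice of transferring the cache at \emph{every} layer (rather than only the final hidden states $H$) does the real work---passing $H$ alone would force $A_2$ to re-derive the intermediate-layer representations under a different attention context, breaking the equality. I would therefore state these two conditions explicitly as hypotheses and verify that the construction of $\mathcal{M}_{A_1}$ in \ours satisfies them.
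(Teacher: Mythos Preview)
Your proposal is correct and takes essentially the same approach as the paper: both rest on the KV-cache correctness property that, under causal self-attention, the computation at a suffix position depends on the prefix only through its per-layer keys and values, so supplying $\mathcal{M}_{A_1}$ is equivalent to re-encoding the prefix. The paper packages this as an explicit layer-by-layer induction (showing $h^{(l)}=h'^{(l)}$ for $l=1,\dots,L$) rather than stating the dependency principle up front as you do, and it leaves implicit the shared-backbone and positional-index hypotheses that you correctly flag as doing the real work.
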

\vspace{-10pt}

\begin{proof}

Let $h^{(l)},K^{(l)},V^{(l)}$ and $h'^{(l)},K'{}^{(l)},V'{}^{(l)}$ denote the output, keys, and values of $l$-th transformer layer when receiving latent working memory from preceding agents and when directly inputting the preceding agents' outputs, respectively. In the following, we will use induction to show that $h^{(l)}=h'{}^{(l)}$ for every layer $l=1,\dots,L$.

\paragraph{Induction step.} Suppose that $h^{(l-1)}=h'{}^{(l-1)}$, and we will show that $h^{(l)}=h'{}^{(l)}$. 

The KV cache contains $K_{\le t+m}{}^{(l)}$ and $V_{\le t+m}{}^{(l)}$. For each past token layers, at each attention layer, the transformer produces one column of $K_{\le t+m}{}^{(l)}$ and a corresponding column of $V_{\le t+m}{}^{(l)}$. At the next step the model forms a query from the current input and then uses that query together with the stored $K_{\le t+m}{}^{(l)}$ and $V_{\le t+m}{}^{(l)}$ to form the attention result. That attention result is a deterministic function of the query and of the keys and values it attends to.

We are comparing two ways to make those same keys and values available to the current computation: (i) actually feeding the earlier tokens into the model again, in which case the model will recompute the same keys and values and then use them in attention; (ii) reading in $K_{\le t+m}{}^{(l)}$ and $V_{\le t+m}{}^{(l)}$ from the cache and use them directly. In both cases the keys and values presented to the attention computation are identical, because the cache was produced by the same model on the same inputs.

Given identical keys and values and the same current input, the attention output is the same in both scenarios. The remainder of the transformer computation that produces the last-layer hidden embedding is a deterministic function of that attention output (and the current input). Therefore the last-layer hidden embedding $h^{(l)}$ produced for the current step is the same whether the model recomputed keys/values from tokens or read $K_{\le t+m}{}^{(l)},V_{\le t+m}{}^{(l)}$ from cache. Formally, since $h^{(l-1)}=h'{}^{(l-1)}$, $K_{\le t+m}{}^{(l)}=K'_{\le t+m}{}^{(l)}$, and $V_{\le t+m}{}^{(l)}=V'_{\le t+m}{}^{(l)}$, then $h^{(l)}=h'{}^{(l)}$.

\paragraph{Induction base case.} For the first layer, similarly with the induction step, since the input is the same (for both latent-based and text-based reasoning), $K_{\le t+m}{}^{(1)}=K'_{\le t+m}{}^{(1)}$, and $V_{\le t+m}{}^{(1)}=V'_{\le t+m}{}^{(1)}$, then $h^{(1)}=h'{}^{(1)}$.

\paragraph{Conclusion.} By induction, we have that $h^{(l)}=h'{}^{(l)}$ of every layer $l=1,\dots,L$. In particular, since $h=h{}^{(L)}$ and $h'=h'{}^{(L)}$, then $h=h{}^{(L)}=h'{}^{(L)}=h'$.
\end{proof}

\subsection{Proof of Theorem~\ref{PRP:complexity}}
\label{app:complexity}

\begin{theorem}[\textbf{Restate of Theorem \ref{PRP:complexity}}]\label{PRP:complexity_app}
The time complexity for each agent of \ours is $O\big((d_h^2m+d_hm^2+d_htm)L\big)$, where $t$ is the input length of this agent, and $m$ is the length of latent thoughts. In contrast, assuming Theorem~\ref{PRP:express}, the time complexity for each agent of the vanilla text-based MAS needs to be $O\big(\big(d_h^3m\frac1{\log|\CAL V|}+d_h^3m^2\frac1{\log^2|\CAL V|}+d_h^2tm\frac1{\log|\CAL V|}\big)L+d_h^2|\CAL V|m\frac1{\log|\CAL V|}\big)$ to achieve the same expressiveness.
\end{theorem}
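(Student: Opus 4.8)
The plan is to count, for a single decoder-only agent, the incremental arithmetic cost of the generation phase layer by layer and step by step, and then multiply by the number of layers $L$. First I would fix the per-layer cost of producing one new token given a KV cache of current length $n$: the three projections $W_Q,W_K,W_V$ and the output projection each cost $O(d_h^2)$, the feed-forward block costs $O(d_h^2)$, and the single-query attention against the cached keys and values costs $O(n d_h)$ (one $q K^\top$ product, a softmax over $n$ scores, and one weighted sum of $n$ value vectors). The decisive point for \ours is that a latent step feeds the last-layer hidden state — optionally premultiplied by the fixed $W_a$, itself an $O(d_h^2)$ operation — straight back as the next input embedding, so it never forms vocabulary logits; hence the $O(d_h|\CAL V|)$ language-model-head term is simply absent.

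Next I would sum over the $m$ latent steps. By Theorem~\ref{PRP:kv-info} the agent inherits its length-$t$ input context directly as working memory, so during the $j$-th latent step the cache has length $t+j-1$ and the per-layer attention cost is $O((t+j)d_h)$, while the projection and feed-forward cost is $O(d_h^2)$. Summing $j=1,\dots,m$ gives $O(d_h^2 m)$ for the linear maps and $O\big(d_h\sum_{j=1}^m(t+j)\big)=O(d_h t m + d_h m^2)$ for attention; multiplying by the $L$ layers yields exactly $O\big((d_h^2 m + d_h m^2 + d_h t m)L\big)$, the claimed \ours bound.

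For the text-based counterpart I would invoke Theorem~\ref{PRP:express}: to reproduce the information content of the $m$ latent thoughts losslessly, the agent must emit $m'=\Theta\big(d_h m/\log|\CAL V|\big)$ tokens. The per-step transformer cost is identical in form, but every decoded token additionally incurs the head matmul $h W_\text{out}$ followed by a softmax over the vocabulary, costing $O(d_h|\CAL V|)$ per step, i.e.\ $O(d_h|\CAL V|m')$ in total. Reusing the \ours accounting with $m$ replaced by $m'$ gives transformer cost $O\big((d_h^2 m' + d_h m'^2 + d_h t m')L\big)$; substituting $m'=\Theta(d_h m/\log|\CAL V|)$ term by term turns $d_h^2 m'$ into $d_h^3 m/\log|\CAL V|$, $d_h m'^2$ into $d_h^3 m^2/\log^2|\CAL V|$, $d_h t m'$ into $d_h^2 t m/\log|\CAL V|$, and the head term $d_h|\CAL V|m'$ into $d_h^2|\CAL V|m/\log|\CAL V|$, reproducing the stated text-based bound.

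The main obstacle is conceptual rather than computational: I must argue cleanly that the two costs are being compared at matched expressiveness, so that the token budget $m'$ is governed by Theorem~\ref{PRP:express} and not chosen freely, and that the $O(d_h|\CAL V|)$ decoding-head term genuinely appears for text but not for latent generation — this single term, scaling with $|\CAL V|$ rather than $d_h$, is what produces the dominant asymptotic gap. A secondary point to handle carefully is the bookkeeping of the input context: I rely on Theorem~\ref{PRP:kv-info} so that the length-$t$ prefix enters only through the $O(d_h t m)$ attention interactions during generation rather than as a separately recomputed prefill, which is what keeps the \ours bound free of standalone $t^2 d_h$ or $t d_h^2$ terms.
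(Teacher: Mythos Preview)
Your proposal is correct and follows essentially the same approach as the paper: you account per layer for the $O(d_h^2)$ projection/feed-forward cost and the $O((t+j)d_h)$ incremental attention cost, sum over the $m$ latent (respectively $m'$ textual) steps and over $L$ layers, note that the $O(d_h|\CAL V|)$ language-model head appears only in the text case, and then substitute $m'=\Theta(d_hm/\log|\CAL V|)$ from Theorem~\ref{PRP:express}. The paper's version is terser (it states the summed attention cost $O(d_h(t+m)m)$ directly rather than summing step by step) and does not explicitly invoke Theorem~\ref{PRP:kv-info} to justify omitting prefill, but the underlying decomposition and substitution are identical.
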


\begin{proof}

We analyze the time complexity of our \ours and the vanilla text-based MAS separately.

\paragraph{Time complexity of our method.} Recall that a transformer layer consists of two main components: self-attention and feed-forward networks. For a length-$(t+m)$ sequence, the time complexity to compute self-attention for $m$ latent reasoning steps is $O(d_h(t+m)m)=O(d_h(m^2+tm))$ due to the attention computation between $O(t^2)$ pairs of tokens, and the time complexity to compute feed-forward networks for $m$ latent reasoning steps is $O(d_h^2m)$ due to matrix--vector multiplication. Since there are $L$ layers, the overall time complexity of our method is
\AL{
O\big((d_h(m^2+tm)+d_h^2m)L\big).
}

\paragraph{Time complexity of the vanilla text-based MAS.} 
Let $m'$ denote the number of text-based reasoning steps.
Similarly with the complexity analysis of our method, the time complexity to compute the hidden embeddings is
\AL{O\big((d_h(m'{}^2+tm')+d_h^2m')L\big).}
Besides that, due to matrix--vector multiplication and softmax computation, the time complexity to decode hidden embeddings into tokens is
\AL{O\big(d_h|\CAL V|m'\big).}
Hence, the overall time complexity of the vanilla MAS is
\AL{O\big((d_h(m'{}^2+tm')+d_h^2m')L+d_h|\CAL V|m'\big).}
Assuming Theorem~\ref{PRP:express}, the number of text-based reasoning steps is
\AL{m'=O\Big(\frac{d_hm}{\log|\CAL V|}\Big).}
It follows that the overall time complexity is
\AL{
&O\big((d_h(m'{}^2+tm)+d_h^2m')L+d_h|\CAL V|m'\big)
\\={}&\Big(\Big(d_h\Big(\Big(\frac{d_hm}{\log|\CAL V|}\Big)^2+t\Big(\frac{d_hm}{\log|\CAL V|}\Big)\Big)+d_h^2\Big(\frac{d_hm}{\log|\CAL V|}\Big)\Big)L+d_h|\CAL V|\Big(\frac{d_hm}{\log|\CAL V|}\Big)\Big)
\\={}&O\Big(\Big(\frac{d_h^3m^2}{\log^2|\CAL V|}+\frac{d_h^3m}{\log|\CAL V|}+\frac{d_h^2tm}{\log|\CAL V|}\Big)L+\frac{d_h^2|\CAL V|m}{\log|\CAL V|}\Big)
.}

\end{proof}

\newpage

\section{Experiment Setups}

\subsection{Evaluation Details}
\label{app:eva_details}
We introduce all datasets used in our experiments as follows:
\paragraph{Math \& Science Reasoning.}
\begin{itemize}[leftmargin=*]
    \item \textbf{GSM8K}~\citep{gsm8k} is a widely used benchmark of 8.5K grade-school math word problems designed to evaluate multi-step numerical reasoning. Each problem requires decomposing a natural-language description into structured arithmetic steps, making it a standard testbed for assessing chain-of-thought reasoning ability.
    
    \item \textbf{AIME24}~\citep{aime24} consists of 30 competition-level problems from the 2024 American Invitational Mathematics Examination. These questions span algebra, geometry, number theory, and combinatorics, and require precise numeric answers with typically 1–3 digits, making the benchmark a compact but challenging evaluation of high-school Olympiad-style reasoning.

    \item \textbf{AIME25}~\citep{aime25} provides 30 additional problems from the 2025 AIME exam, maintaining the same answer format and difficulty profile. Compared with AIME24, this benchmark includes more multi-phase derivations and intricate combinatorial constructions, offering a complementary stress test for mathematical robustness.

    \item \textbf{GPQA-Diamond}~\citep{gpqa} is the most difficult split of the GPQA benchmark with 198 questions, featuring graduate-level multiple-choice questions written by domain experts in physics, biology, and chemistry. The dataset emphasizes conceptual depth, cross-disciplinary reasoning, and the ability to synthesize multi-step scientific arguments under rigorous distractor settings.

    \item \textbf{MedQA}~\citep{medqa} contains real medical licensing exam questions that assess biomedical knowledge, clinical reasoning, and diagnostic decision-making. Problems require integrating textual context with domain-specific medical understanding, making the benchmark a representative testbed for professional-level scientific reasoning.
\end{itemize}

\paragraph{Commonsense Reasoning.}
\begin{itemize}[leftmargin=*]
    \item \textbf{ARC-Easy}~\citep{arc-easy} consists of grade-school science questions from the AI2 Reasoning Challenge that test foundational factual knowledge and straightforward commonsense reasoning. As a simplified subset of ARC, it serves as a baseline measure of basic scientific understanding without requiring complex multi-step inference.

    \item \textbf{ARC-Challenge}~\citep{arc-challenge} includes the most difficult items from the AI2 Reasoning Challenge. These questions are intentionally adversarial, requiring multi-hop reasoning, causal and counterfactual inference, and systematic elimination of distractor choices. Performance on ARC-Challenge is widely regarded as a strong indicator of robust commonsense reasoning capabilities.
\end{itemize}

\paragraph{Code Generation.}
\begin{itemize}[leftmargin=*]
    \item \textbf{MBPP-Plus}~\citep{codeplus} extends the original MBPP benchmark with broader input coverage, additional hidden test cases, and stricter execution-based evaluation. Each problem requires generating a self-contained Python function that satisfies a comprehensive unit-test suite, making the benchmark a robust measure of code synthesis reliability and correctness.

    \item \textbf{HumanEval-Plus}~\citep{codeplus} augments HumanEval with denser, more challenging test suites, significantly increasing the rigor of functional correctness evaluation. The benchmark emphasizes generalization beyond prompt examples and tests a model’s ability to produce semantically precise, executable Python code under more demanding verification settings.
\end{itemize}

\subsection{Implementation Details}
\label{app:impl_details}
In addition to the experimental setups described in the main paper, we provide additional implementation and evaluation details below.

\paragraph{Software Backend}
All methods are implemented in Python using PyTorch and HuggingFace \texttt{Transformers}, with an optional \texttt{vLLM} backend for fast decoding and tensor-parallel inference.
We use the official chat templates and special tokens such as \texttt{<|im\_start|>} and \texttt{<|im\_end|>}.

\paragraph{Evaluation protocol.}
For all non-coding benchmarks, we report accuracy based on answer matching of the final answer after text normalization (lowercasing, trimming whitespace, and removing extraneous punctuation).

For multiple-choice datasets (GPQA-Diamond, MedQA, ARC-Easy, ARC-Challenge), we first extract the model’s final answer string and then compare it via exact match to the answer letter.

For numeric problems (GSM8k, AIME24, AIME25), we evaluate correctness based on numeric equality: we extract the final predicted answer, parse both prediction and answer into numbers, and mark as correct only if the two values match. Predictions that fail numeric parsing are counted as incorrect.

For code generation tasks (MBPP-Plus and HumanEval-Plus), we evaluate the code by executing unit tests.
Specifically, we extract the predicted code from model’s output, append the ground-truth tests provided by the benchmark, and execute the combined script in a sandboxed environment with a 10-second timeout.
A sample is counted as correct if and only if all tests pass without runtime errors.

\newpage

\section{Additional Experiments}

\begin{figure*}[!h]
    \centering
    \includegraphics[width=0.8\linewidth]{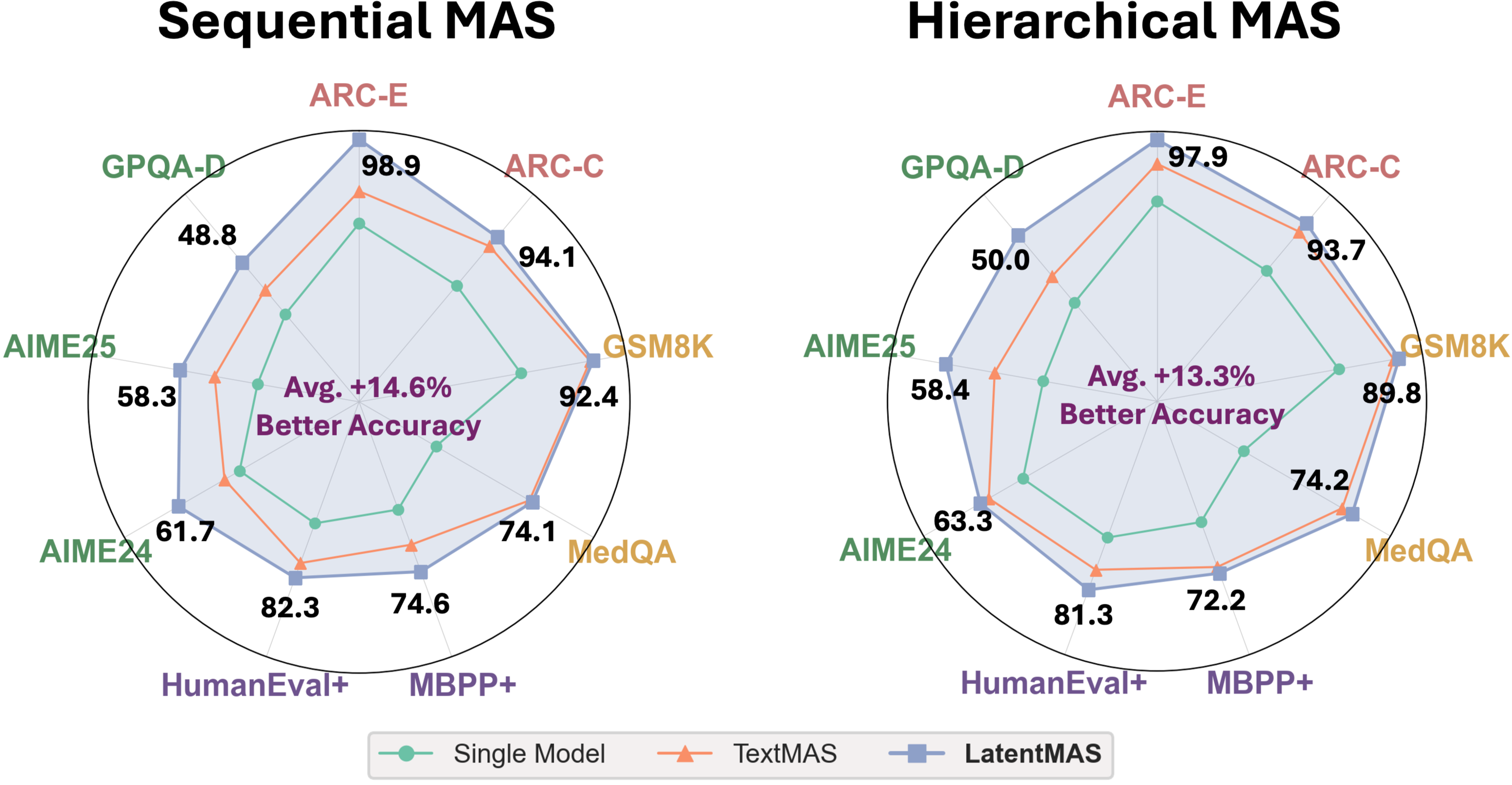}
    \caption{Illustration of the accuracy gains via \textbf{\ours} across the Sequential and Hierarchical MAS settings. \ours consistently improves system-level reasoning accuracy compared with single model and text-based MAS.}
    \label{fig:overall}
\end{figure*}

\subsection{Performance of \ours under Hierarchical MAS Setting}

\begin{table*}[!h]
\centering
\renewcommand{\arraystretch}{1.18}
\caption{\textbf{Main results of \ours on 6 general tasks under the Hierarchical MAS setting.} We report accuracy, token usage, and end-to-end speed, and highlight the performance gains following the same evaluation protocol as in Table~\ref{tab:seq_mas}.}
\label{tab:hier_mas}
\small
\resizebox{\textwidth}{!}{
\begin{tabular}{l|c|cccl|cccl|cccl}
\toprule
\multirow{2}{*}{\textbf{Tasks}} &
\multirow{2}{*}{\textbf{Metrics}} &
\multicolumn{3}{c}{\textbf{Qwen3-4B}} &
\multirow{2}{*}{\textbf{Improve}} &
\multicolumn{3}{c}{\textbf{Qwen3-8B}} &
\multirow{2}{*}{\textbf{Improve}} &
\multicolumn{3}{c}{\textbf{Qwen3-14B}} &
\multirow{2}{*}{\textbf{Improve}} \\
\cmidrule(lr){3-5}\cmidrule(lr){7-9}\cmidrule(lr){11-13}
& & Single & TextMAS & \ours & & Single & TextMAS & \ours & & Single & TextMAS & \ours & \\
\midrule [-0.4ex]\midrule\addlinespace[-0.000ex]
\rowcolor{gray!10}
\multicolumn{14}{c}{
    \rule{0pt}{1.1em}
    \textbf{\textit{Hierarchical MAS Setting}}
    \rule[-0.3em]{0pt}{1.1em}
} \\
[-0.4ex]\midrule\addlinespace[-0.000ex]

\cellcolor{red!5}
& Acc.  & 95.4 & 97.1 & 96.8 & $\downarrow$ 0.3
& 95.6 & 98.2 & \textbf{98.3} & \gaincell{$\uparrow$ 0.1}
& 97.2 & 98.3 & \textbf{98.7} & \gaincell{$\uparrow$ 0.4} \\
\cellcolor{red!5} \textbf{ARC-E}
& Token & 724 & 2054 & \textbf{363} & \gaincell{$\downarrow$ 82.3\%}
& 656 & 2237 & \textbf{308} & \gaincell{$\downarrow$ 86.2\%}
& 608 & 2752 & \textbf{619} & \gaincell{$\downarrow$ 77.5\%} \\
\cellcolor{red!5}
& Speed  & 369 & 2239 & \textbf{591} & \gaincell{$\times$3.8}
& 404 & 3619 & \textbf{1779} & \gaincell{$\times$2.0}
& 551 & 7102 & \textbf{1884} & \gaincell{$\times$3.8} \\
[-0.4ex]\midrule\addlinespace[-0.000ex]

\cellcolor{red!5}
& Acc.  & 89.2 & 92.5 & 91.7 & $\downarrow$ 0.8
& 91.0 & 93.3 & \textbf{93.9} & \gaincell{$\uparrow$ 0.6}
& 92.6 & 95.3 & \textbf{95.5} & \gaincell{$\uparrow$ 0.2} \\
\cellcolor{red!5} \textbf{ARC-C}
& Token & 913 & 2674 & \textbf{447} & \gaincell{$\downarrow$ 83.3\%}
& 846 & 2854 & \textbf{344} & \gaincell{$\downarrow$ 87.9\%}
& 773 & 2167 & \textbf{295} & \gaincell{$\downarrow$ 86.4\%} \\
\cellcolor{red!5}
& Speed  & 97 & 1275 & \textbf{299} & \gaincell{$\times$4.3}
& 266 & 2034 & \textbf{714} & \gaincell{$\times$2.8}
& 338 & 4283 & \textbf{1090} & \gaincell{$\times$3.9} \\
[-0.4ex]\midrule\addlinespace[-0.000ex]

\cellcolor{orange!7}
& Acc.  & 82.4 & 89.4 & 88.4 & $\downarrow$ 1.0
& 81.1 & 90.4 & 89.5 & $\downarrow$ 0.9
& 83.7 & 90.8 & \textbf{91.6} & \gaincell{$\uparrow$ 0.8} \\
\cellcolor{orange!7} \textbf{GSM8K}
& Token & 1136 & 3098 & \textbf{555} & \gaincell{$\downarrow$ 82.1\%}
& 1280 & 2370 & \textbf{353} & \gaincell{$\downarrow$ 85.1\%}
& 1118 & 3021 & \textbf{495} & \gaincell{$\downarrow$ 83.6\%} \\
\cellcolor{orange!7}
& Speed  & 469 & 1878 & \textbf{360} & \gaincell{$\times$5.2}
& 449 & 1365 & \textbf{702} & \gaincell{$\times$1.9}
& 536 & 3675 & \textbf{1631} & \gaincell{$\times$2.3} \\
[-0.4ex]\midrule\addlinespace[-0.000ex]

\cellcolor{orange!7}
& Acc.  & 47.7 & 65.0 & \textbf{67.3} & \gaincell{$\uparrow$ 2.3}
& 53.0 & 76.3 & \textbf{77.0} & \gaincell{$\uparrow$ 0.7}
& 64.7 & 78.0 & \textbf{78.3} & \gaincell{$\uparrow$ 0.3} \\
\cellcolor{orange!7} \textbf{MedQA}
& Token & 2134 & 6702 & \textbf{1015} & \gaincell{$\downarrow$ 84.9\%}
& 2098 & 6893 & \textbf{1007} & \gaincell{$\downarrow$ 85.4\%}
& 1746 & 5473 & \textbf{899} & \gaincell{$\downarrow$ 83.6\%} \\
\cellcolor{orange!7}
& Speed  & 236 & 1495 & \textbf{557} & \gaincell{$\times$2.7}
& 476 & 3387 & \textbf{964} & \gaincell{$\times$3.5}
& 1360 & 7591 & \textbf{1250} & \gaincell{$\times$6.1} \\
[-0.4ex]\midrule\addlinespace[-0.000ex]

\cellcolor{LavenderLight!20}
& Acc.  & 63.5 & 69.3 & \textbf{70.6} & \gaincell{$\uparrow$ 1.3}
& 64.8 & 71.9 & \textbf{72.2} & \gaincell{$\uparrow$ 0.3}
& 68.5 & 73.0 & \textbf{73.8} & \gaincell{$\uparrow$ 0.8} \\
\cellcolor{LavenderLight!20} \textbf{MBPP+}
& Token & 1634 & 6782 & \textbf{1339} & \gaincell{$\downarrow$ 80.3\%}
& 2053 & 7703 & \textbf{1264} & \gaincell{$\downarrow$ 83.6\%}
& 1858 & 7458 & \textbf{1187} & \gaincell{$\downarrow$ 84.1\%} \\
\cellcolor{LavenderLight!20}
& Speed  & 523 & 1766 & \textbf{489} & \gaincell{$\times$3.6}
& 1064 & 3898 & \textbf{1387} & \gaincell{$\times$2.8}
& 2410 & 9162 & \textbf{2507} & \gaincell{$\times$3.7} \\
[-0.4ex]\midrule\addlinespace[-0.000ex]

\cellcolor{LavenderLight!20}
& Acc.  & 75.0 & 76.2 & \textbf{79.3} & \gaincell{$\uparrow$ 3.1}
& 74.4 & 76.8 & \textbf{78.0} & \gaincell{$\uparrow$ 1.2}
& 76.8 & 84.1 & \textbf{86.6} & \gaincell{$\uparrow$ 2.5} \\
\cellcolor{LavenderLight!20} \textbf{HumanEval+}
& Token & 2380 & 8127 & \textbf{1373} & \gaincell{$\downarrow$ 83.1\%}
& 2507 & 8768 & \textbf{1274} & \gaincell{$\downarrow$ 85.5\%}
& 2366 & 8114 & \textbf{1512} & \gaincell{$\downarrow$ 81.4\%} \\
\cellcolor{LavenderLight!20}
& Speed  & 274 & 931 & \textbf{333} & \gaincell{$\times$2.8}
& 502 & 1809 & \textbf{439} & \gaincell{$\times$4.1}
& 1084 & 3988 & \textbf{1188} & \gaincell{$\times$3.4} \\
[-0.4ex]\bottomrule
\end{tabular}
}
\end{table*}

We report additional results on three Qwen3 backbones under the hierarchical MAS setting to complement the main experiments. As shown in Table~\ref{tab:hier_mas}, \ours consistently achieves higher or comparable system-level accuracy relative to TextMAS while substantially reducing both end-to-end inference time and total system token usage. These results mirror the trends observed in the sequential setting and confirm that latent working-memory transfer remains effective in hierarchical collaboration, where multiple expert agents contribute to the final prediction.

\newpage
\subsection{\ours on Llama Backbones}
\label{app:llama_latent_MAS}

\begin{table*}[h]\centering
\renewcommand{\arraystretch}{1.18}
\caption{\textbf{Main results of \ours on Llama3 families under Sequential and Hierarchical MAS settings.} We report accuracy, token usage, and end-to-end speed, and highlight the performance gains following the same evaluation protocol as in Table~\ref{tab:seq_mas}.}
\label{tab:all_llama_results}
\small
\resizebox{\textwidth}{!}{
\begin{tabular}{l|c|cccl|cccl}
\toprule
\multirow{2}{*}{\textbf{Tasks}} &
\multirow{2}{*}{\textbf{Metrics}} &
\multicolumn{3}{c}{\textbf{Llama-3.2-3B-Instruct}} &
\multirow{2}{*}{\textbf{Improve}} &
\multicolumn{3}{c}{\textbf{Llama-3.1-8B-Instruct}} &
\multirow{2}{*}{\textbf{Improve}} \\
\cmidrule(lr){3-5}\cmidrule(lr){7-9}
& & Single & TextMAS & \ours & & Single & TextMAS & \ours & \\
\midrule [-0.4ex]\midrule\addlinespace[-0.000ex]

\rowcolor{gray!10}
\multicolumn{10}{c}{
    \rule{0pt}{1.1em}
    \textbf{\textit{Sequential MAS Setting}}
    \rule[-0.3em]{0pt}{1.1em}
} \\
[-0.4ex]\midrule\addlinespace[-0.000ex]

\cellcolor{red!5}
& Acc.  & 79.2 & 80.1 & \textbf{80.4} & \gaincell{$\uparrow$ 0.3}
& 85.7 & 89.6 & \textbf{91.2} & \gaincell{$\uparrow$ 1.6} \\
\cellcolor{red!5} \textbf{ARC-E}
& Token & 236 & 1889 & \textbf{229} & \gaincell{$\downarrow$ 87.9\%}
& 219 & 1981 & \textbf{228} & \gaincell{$\downarrow$ 88.5\%} \\
\cellcolor{red!5}
& Speed & 155 & 1670 & \textbf{392} & \gaincell{$\times$4.3}
& 352 & 2708 & \textbf{516} & \gaincell{$\times$5.2} \\
[-0.4ex]\midrule\addlinespace[-0.000ex]

\cellcolor{orange!7}
& Acc.  & 46.7 & 48.0 & 48.0 & $\uparrow$ 0.0
& 59.0 & 65.0 & \textbf{65.3} & \gaincell{$\uparrow$ 0.3} \\
\cellcolor{orange!7} \textbf{MedQA}
& Token & 405 & 1595 & \textbf{403} & \gaincell{$\downarrow$ 74.7\%}
& 382 & 2354 & \textbf{316} & \gaincell{$\downarrow$ 86.6\%} \\
\cellcolor{orange!7}
& Speed & 51 & 236 & \textbf{85} & \gaincell{$\times$2.8}
& 112 & 589 & \textbf{182} & \gaincell{$\times$3.2} \\
[-0.4ex]\midrule\addlinespace[-0.000ex]

\cellcolor{LavenderLight!20}
& Acc.  & 39.6 & 43.2 & \textbf{44.5} & \gaincell{$\uparrow$ 1.3}
& 51.2 & 55.4 & \textbf{56.7} & \gaincell{$\uparrow$ 1.3} \\
\cellcolor{LavenderLight!20} \textbf{HumanEval+}
& Token & 227 & 1042 & \textbf{338} & \gaincell{$\downarrow$ 67.6\%}
& 161 & 740 & \textbf{218} & \gaincell{$\downarrow$ 70.5\%} \\
\cellcolor{LavenderLight!20}
& Speed & 64 & 199 & \textbf{77} & \gaincell{$\times$2.6}
& 80.6 & 333 & \textbf{101} & \gaincell{$\times$3.3} \\
[-0.4ex]\midrule\addlinespace[-0.000ex]

\rowcolor{gray!10}
\multicolumn{10}{c}{
    \rule{0pt}{1.1em}
    \textbf{\textit{Hierarchical MAS Setting}}
    \rule[-0.3em]{0pt}{1.1em}
} \\
[-0.4ex]\midrule\addlinespace[-0.000ex]

\cellcolor{red!5}
& Acc.  & 79.2 & 81.6 & \textbf{82.3} & \gaincell{$\uparrow$ 0.7}
& 85.7 & 89.3 & \textbf{90.4} & \gaincell{$\uparrow$ 1.1} \\
\cellcolor{red!5} \textbf{ARC-E}
& Token & 236 & 956 & \textbf{177} & \gaincell{$\downarrow$ 81.5\%}
& 219 & 1118 & \textbf{206} & \gaincell{$\downarrow$ 81.6\%} \\
\cellcolor{red!5}
& Speed & 155 & 615 & \textbf{192} & \gaincell{$\times$3.2}
& 352 & 1832 & \textbf{669} & \gaincell{$\times$2.7} \\
[-0.4ex]\midrule\addlinespace[-0.000ex]

\cellcolor{orange!7}
& Acc.  & 46.7 & 50.3 & 50.0 & $\downarrow$ 0.3
& 59.0 & 64.3 & \textbf{67.7} & \gaincell{$\uparrow$ 3.4} \\
\cellcolor{orange!7} \textbf{MedQA}
& Token & 405 & 1729 & \textbf{359} & \gaincell{$\downarrow$ 79.2\%}
& 382 & 1847 & \textbf{191} & \gaincell{$\downarrow$ 89.7\%} \\
\cellcolor{orange!7}
& Speed & 51 & 282 & \textbf{73} & \gaincell{$\times$3.9}
& 112 & 464 & \textbf{204} & \gaincell{$\times$2.3} \\
[-0.4ex]\midrule\addlinespace[-0.000ex]

\cellcolor{LavenderLight!20}
& Acc.  & 39.6 & 43.9 & \textbf{45.7} & \gaincell{$\uparrow$ 1.8}
& 51.2 & 54.2 & \textbf{55.4} & \gaincell{$\uparrow$ 1.2} \\
\cellcolor{LavenderLight!20} \textbf{HumanEval+}
& Token & 227 & 719 & \textbf{226} & \gaincell{$\downarrow$ 68.6\%}
& 161 & 414 & \textbf{187} & \gaincell{$\downarrow$ 54.8\%} \\
\cellcolor{LavenderLight!20}
& Speed & 64 & 133 & \textbf{72} & \gaincell{$\times$1.8}
& 80.6 & 156 & \textbf{88} & \gaincell{$\times$1.8} \\
[-0.4ex]\bottomrule
\end{tabular}
}
\vspace{-8pt}
\end{table*}

We report additional sequential and hierarchical MAS results on Llama-3 backbones to complement the main evaluations. Consistent with Table \ref{tab:seq_mas} and \ref{tab:reasoning_intense_res} in the main paper, LatentMAS maintains higher or comparable accuracy relative to TextMAS while consistently reducing system-wise token usage and inference time. Across tasks, LatentMAS preserves the same qualitative trend observed for Qwen models, achieving faster end-to-end inference with markedly fewer decoded tokens, indicating that \ours exhibits strong generalizability across different model families.

\subsection{Additional Efficiency Analyses on \ours }

\begin{figure*}[!h]
    \centering
    \includegraphics[width=\linewidth]{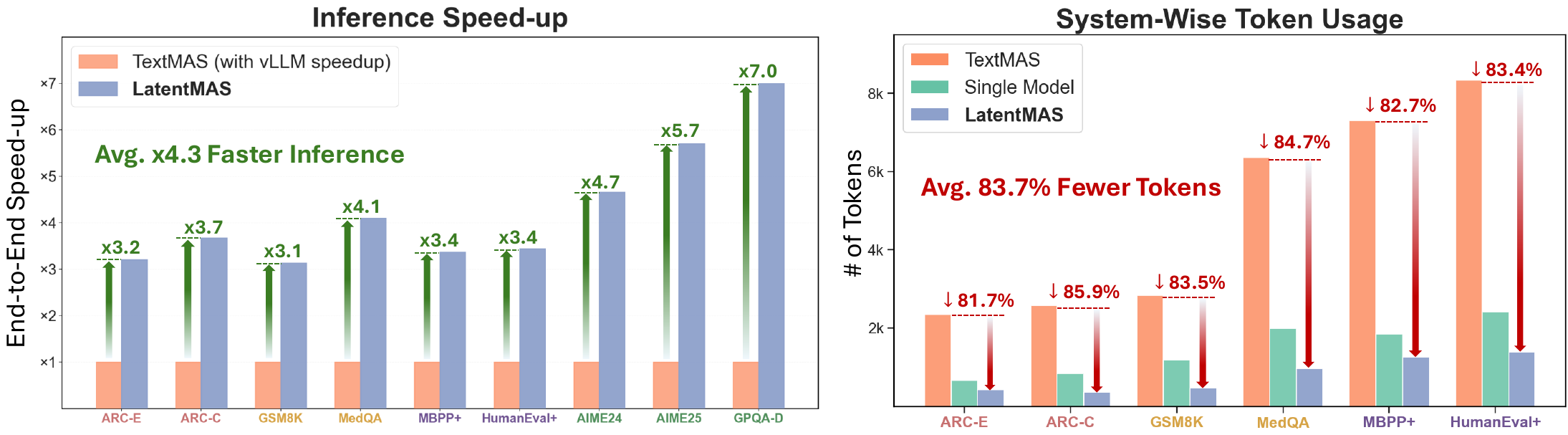}
    \caption{Efficiency gains of \ours over single model and TextMAS under the Hierarchical MAS setting. \textbf{Left:} \ours achieves consistent faster end-to-end inference speed. \textbf{Right:} \ours requires fewer system-wise token usage in total.}
    \label{fig:eff_hier}
\end{figure*}

\textbf{Final-Agent Token Usage in \ours.}
To further disentangle the source of token savings, we additionally measure the output length of the final answering agent only. This analysis excludes intermediate agents and therefore directly compares the final decoding stage of LatentMAS and TextMAS. As shown in Table~\ref{tab:last_agent_tokens}, the final agent in LatentMAS consistently generates fewer text tokens than the final agent in TextMAS across different model scales and tasks, with an average reduction of 29.1\%. This indicates that the token savings in LatentMAS come not only from eliminating intermediate textual communication among agents, but also from shortening the final answer decoding process.
This reduction occurs since the final agent in TextMAS receives long textual reasoning traces from preceding agents and often continues the text-based reasoning process before producing the answer. In contrast, the final agent in LatentMAS receives preceding agents' information through compact latent working memory and aggregates these latent thoughts before decoding the final response. As a result, LatentMAS reduces both inter-agent communication tokens and the final-stage decoding length.

\begin{table}[!t]
\centering
\caption{Token usage of the final agent in MAS. LatentMAS consistently reduces the final decoding length compared with TextMAS.}
\label{tab:last_agent_tokens}
\renewcommand{\arraystretch}{1.08}
\setlength{\tabcolsep}{4pt}
\small
\begin{tabular}{llccccc}
\toprule
\textbf{Method} & \textbf{Size} & \textbf{ARC-C} & \textbf{GSM8K} & \textbf{MedQA} & \textbf{HumanEval+} & \textbf{GPQA-D} \\
\midrule
TextMAS & 4B  & 614 & 607 & 1539 & 2037 & -- \\
LatentMAS & 4B  & \textbf{447} & \textbf{555} & \textbf{1015} & \textbf{1373} & -- \\
\midrule
TextMAS & 8B  & 685 & 412 & 1664 & 2167 & 4758 \\
LatentMAS & 8B  & \textbf{344} & \textbf{353} & \textbf{1007} & \textbf{1274} & \textbf{3395} \\
\midrule
TextMAS & 14B & 492 & 573 & 1303 & 2008 & 4517 \\
LatentMAS & 14B & \textbf{295} & \textbf{495} & \textbf{899} & \textbf{1512} & \textbf{3606} \\
\bottomrule
\end{tabular}
\end{table}

\newpage
\section{Detailed Analyses on \ours}
\label{app:add_exps}

\subsection{Additional Analyses on Input-Output Alignment}
\label{app:input_output_app}

\begin{figure}[!h]
    \centering
    \includegraphics[width=0.5\linewidth]{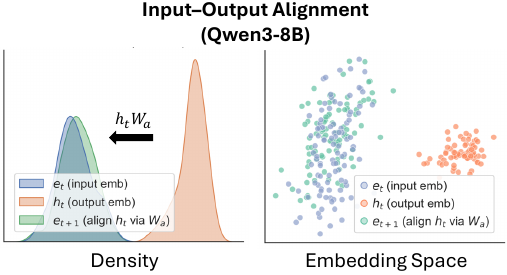}
    \vspace{-5pt}
    \caption{\textbf{Effectiveness of the input-output alignment $W_a$ on 8B models.} Unaligned output embeddings ($h_t$) drift away from the original input embeddings ($e_t$), while the aligned vectors ($e_{t+1}$) realign with $e_t$, demonstrating that $W_a$ preserves embedding-space structure and prevents representation drift.}
    \label{fig:input_output_8B}
\end{figure}

In Figure \ref{fig:input_output_8B}, we extend the input-output alignment analysis to additional 8B model scales beyond the Qwen3-4B and Qwen3-14B examples shown in Figure \ref{fig:input_output}. For Qwen3-8B models, the same pattern holds: unaligned latent outputs drift away from the input embedding distribution, while applying the alignment matrix restores geometric and density-level consistency with the original embedding space.

\subsection{Quantifying Latent Thought Diversity.}
While Figure~\ref{fig:latent_step_pattern} provides a qualitative visualization of the embedding distributions, we further quantify the representational diversity of latent thoughts using a cosine-similarity-based metric. Specifically, for each method, we compute the average pairwise cosine similarity among the generated embeddings. A lower value indicates that the representations are less collapsed and more dispersed in the embedding space, corresponding to higher representational diversity.

\begin{table}[!h]
\centering
\caption{Diversity analysis of generated representations. We report the average pairwise cosine similarity among generated embeddings, where lower values indicate less collapsed and more diverse representations.}
\label{tab:latent_diversity}
\renewcommand{\arraystretch}{1.1}
\resizebox{0.5\linewidth}{!}{
\begin{tabular}{lccc}
\toprule
\textbf{Method} & \textbf{Qwen3-4B} & \textbf{Qwen3-8B} & \textbf{Qwen3-14B} \\
\midrule
TextMAS & 0.126 & 0.142 & 0.155 \\
LatentMAS & \textbf{0.104} & \textbf{0.093} & \textbf{0.108} \\
\bottomrule
\end{tabular}
}
\end{table}

As shown in Table~\ref{tab:latent_diversity}, \ours consistently achieves lower average cosine similarity than TextMAS across all three Qwen3 backbones. This confirms that the latent thoughts are not merely aligned with the semantic region of text-based reasoning but also span a broader and less concentrated representation space. Together with Figure~\ref{fig:latent_step_pattern}, these results provide both qualitative and quantitative evidence that latent thoughts preserve semantic consistency with text reasoning while offering richer representational diversity.

\subsection{Ablations on Latent Reasoning and Communication in \ours}
To isolate the contribution of each component in LatentMAS, we further conduct hybrid ablations by selectively replacing either latent reasoning or latent communication with its text-based counterpart. Specifically, we compare LatentMAS with the following two variants:
\begin{itemize}
    \item \textit{Latent Reasoning + Text Communication}, where each agent performs latent reasoning but communicates to the next agent through decoded text, using only the last 128 decoded tokens;
    \item \textit{Text Reasoning + Latent Communication}, where each agent performs standard text-based reasoning while transferring intermediate information to the next agent through latent working memory.
\end{itemize}

As shown in Table~\ref{tab:hybrid_ablation}, both hybrid variants underperform full LatentMAS across all evaluated tasks. This demonstrates that the gains of LatentMAS cannot be attributed to only one component. Latent reasoning improves the expressiveness of each agent's internal computation, while latent communication preserves and transfers intermediate information without forcing it through a compressed text interface. Combining both components enables end-to-end latent collaboration and yields the strongest performance.

\begin{table}[!h]
\centering
\caption{Hybrid ablations on latent reasoning and latent communication using Qwen3-8B. Replacing either component with its text-based counterpart leads to consistent performance drops, indicating that both components contribute to the overall gains of LatentMAS.}
\label{tab:hybrid_ablation}
\renewcommand{\arraystretch}{1.08}
\resizebox{0.8\linewidth}{!}{
\begin{tabular}{lccc}
\toprule
\textbf{Method} & \textbf{GSM8K} & \textbf{MBPP+} & \textbf{MedQA} \\
\midrule
Latent Reasoning + Text Communication & 85.5 & 66.4 & 65.9 \\
Text Reasoning + Latent Communication & 90.1 & 68.0 & 71.2 \\
\textbf{LatentMAS} & \textbf{93.8} & \textbf{74.6} & \textbf{75.3} \\
\bottomrule
\end{tabular}
}
\end{table}

\section{How to Debug \ours?}
\label{app:debug}

In \ours \textit{``debug mode''}, we introduce a switch that enables us to inspect the content of the latent thoughts of agents without altering the overall \ours pipeline.
Specifically, when we switch on the \textit{``debug mode''}, each agent should simultaneously generate its latent thoughts and another text response within the same contextual information (e.g., agent prompt and latent working memory). 
The latent thoughts are passed to the next agent, and the parallel text response serves as a probe for the latent thoughts.

By checking the details of text response of intermediate agents, we can identify the root causes of errors in the final answer. To illustrate this, we present two distinct case studies generated in \textit{``debug mode''} using Qwen3-14B model on GSM8K dataset, corresponding to a failure case and a success case below.
\begin{itemize}
    \item \textbf{Case 1: Incorrect Intermediate Text Leads to Error in \ours Answer}. In the first case of Section~\ref{app:debug_fig}, we examine a problem where \ours produces an incorrect final answer. According to the intermediate agent output, we find the Refiner agent gives an incorrect numerical relationship (as highlighted in red), leading the final Solver agent to utilize this erroneous dependency in its calculation. 
    \item \textbf{Case 2: Correct Intermediate Text Leads to Correct \ours Answer}. Conversely, in the second case of Section~\ref{app:debug_fig}, all intermediate agents produce logically sound intermediate outputs, which successfully guides the final agent to derive the correct answer.
\end{itemize}

Finally, to quantitatively validate that the text in \textit{``debug mode''} serves as a faithful probe for \ours, we conduct a correlation study on 100 debug text - final answer pairs using Qwen3-14B model on GSM8K dataset, which consists of 80 instances with correct final answers and 20 instances with incorrect final answers. We manually annotated the correctness of the each debug text and analyzed their correlation with the final correctness of \ours.

\begin{table}[!h]
    \centering
    \caption{Correlation analysis between the correctness of intermediate text from \textit{``debug mode''} and the final answers of \ours. The reported statistics demonstrate a high consistency between intermediate text correctness and final answer accuracy.}
    \label{app:tab:debug_stats}
    \setlength{\tabcolsep}{12pt}      
    \begin{tabular}{cc|cc}
        \toprule
        & & \multicolumn{2}{c}{\text{\ours Final Answer}} \\
        & & \text{Correct} ($N=80$) & \text{Incorrect} ($N=20$) \\
        \midrule
        \multirow{2}{*}{\text{Intermediate Debug Text}} 
        & \text{Correct} & \textbf{96.2\%} (77/80) & 10.0\% (2/20) \\
        & \text{Incorrect} & 3.8\% (3/80) & \textbf{90.0\%} (18/20) \\
        \bottomrule
    \end{tabular}
\end{table}

As shown in Table~\ref{app:tab:debug_stats}, we observe a high consistency between the correctness of the intermediate text and the final answer. We find that 96.2\% of the intermediate text are valid when the final answer is correct, while 90.0\% contain errors when the final answer contains error. 
The result shows that the correctness of intermediate text output aligns well with \ours, validating the \textit{``debug mode''} as a reliable probe for our framework.

\newpage
\section{Additional Discussions on \ours}

\textbf{Extension to Heterogeneous Agents.} For simplicity and training-free purposes, we assume that all agents in \ours share the same shape of transformer layers. 
To relax this assumption and support heterogeneous agents in practice, one can directly leverage prior studies on layer mapping and ensemble learning \citep{ainsworth2022git, wortsman2022model}
by introducing a trainable adapter to align and share latent representations across different models.

In addition, a potential future direction is to adapt advanced post-training paradigms from text-based MAS to optimize \ours’s latent collaboration protocols to unlock more effective multi-agent reasoning strategies.

\section{Additional Related Works}
\label{app:related_works}

\textbf{Latent Reasoning in LLMs.} 
Beyond explicit chain-of-thought (CoT) reasoning, recent work has explored the continuous latent space of LLMs as an alternative reasoning medium \citep{hao2024training, zhu2025survey,chen2025reasoning,fein2025mixture,zhu2025reasoning}, revealing that hidden states encode richer semantic structures than what discrete token generation can express \citep{zhang2025soft, liu2024deliberation}. 
Latent reasoning methods such as CoCoNut \citep{hao2024training} and latent-space editing approaches (e.g., RepE \citep{zou2023representation}, LoT \citep{fungwacharakorn2024layer}) demonstrate that manipulating internal representations can guide models to reason more coherently and improve controllability without explicit token-level rationales. 
Other works \citep{li2025latent,fein2025bridging,wang2025monet,shi2025kvcomm} have also extended latent reasoning paradigms to vision-language models.
These methods leverage the structure of hidden states to perform interventions, such as steering or editing latent trajectories, that shape downstream reasoning behavior while remaining agnostic to surface-level text. 
By operating directly in the continuous space, they can induce reasoning steps that would be difficult or inefficient to express \citep{zhang2025soft, liu2024deliberation, coda2025exploring}.
Despite these benefits, existing techniques are confined to a single model’s internal computations and do not consider interaction or coordination across multiple reasoning entities \citep{hao2024training}.
On the other hand, \ours extends latent reasoning to multi-agent settings, enabling each agent to generate latent thoughts and propagate latent information to others.
\ours shifts latent reasoning from an isolated capability of individual models to system-level collaboration.

\textbf{Agentic Model Collaboration.}
With increasing algorithmic advances, LLM-based MAS have been applied across diverse domains \cite{sagi2018ensemble,chen2025optima, yan2025beyond}, such as math and science reasoning \citep{pezeshkpour2024reasoning, yue2024clinicalagent, wang2025mixtureofagents}, open-domain question answering \citep{fourney2024magentic, wu2025talk}, and multi-modal GUI interaction \citep{zhang2024large, ye2025mobile}, demonstrating their versatility in complex real-world settings. A large amount of prior works follow sequential planner-solver pipelines or hierarchical expert-summarizer structures, which correspond to the two MAS settings we adopt for evaluating \ours. 

\newpage
\vspace{-10pt}
\section{Case Study on Intermediate Error Tracking of \ours}
\label{app:debug_fig}
\vspace{-6pt}

\begin{tcolorbox}[
  title={Incorrect Case with \ours (Debug Mode)},
  colframe=cyan!40!black,      %
  colback=cyan!6!white,        %
  colbacktitle=cyan!50!black!80,  %
  coltitle=white,                %
  fonttitle=\bfseries,
  width=\textwidth,
  left=0mm,
  right=0mm,
  boxrule=1pt,
  borderline={0.8pt}{0pt}{cyan!60!black},
  top=1mm,
  bottom=1mm,
  arc=1mm
]
\footnotesize
\textbf{Question:}\\
In a room, there are various toys: 5 red cars, 3 action figures, and a doll. The doll costs as much as 3 action figures and one red car costs 4 dollars. How much are all toys worth, of an action figure costs 5 dollars?\\

\hrule 
\vspace{8pt}
\textbf{Planner:}\\
Okay, I need to make a detailed plan to solve the math problem.
\begin{enumerate}[leftmargin=*]
  \item Determine the cost of one action figure (given as \$5).
  \item Calculate the total cost of the doll; calculate the total cost of all red cars using the given cost of one red car (\$4); calculate the total cost of all action figures by multiplying the number of action figures by the cost of one action figure.
  \item Add the costs of all toys (red cars, action figures, and the doll) to find the total value of all toys.
\end{enumerate}

\textbf{Critic:}\\
The original plan from the Planner Agent was:
\begin{enumerate}[leftmargin=*]
  \item Identify the cost of action figure.
  \item Calculate the total cost for each type by multiplying the quantity by the individual cost.
  \item Sum up all the total costs to find the total value of all toys.
\end{enumerate}

But the problem is that the Planner Agent's plan didn't account for the doll's cost being dependent on the action figures and the red car. So, the original plan might have missed that step. Let me check the original plan again.\\

But the doll's cost isn't given directly. It's given in terms of other toys. So, the first step should be to calculate the doll's cost based on the given relationship. The original plan might have missed that step, assuming that the cost of each type is given directly, which is not the case for the doll. Therefore, the plan needs to be adjusted to first calculate the doll's cost using the given relationship before proceeding to calculate the total cost for each type.\\

So, the feedback should point out that the plan didn't account for the doll's cost calculation based on the given relationship. The user needs to first determine the doll's cost by using the values of the action figures and the red car, then proceed with the rest of the steps.\\

\textbf{Refiner:}\\
The Original Plan is:
\begin{enumerate}[leftmargin=*]
  \item Identify the cost of each type of toy.
  \item Calculate the total cost for each type by multiplying the quantity by the individual cost.
  \item Sum up all the total costs to find the total value of all toys.
\end{enumerate}

Feedback:
The original plan fails to explicitly address the doll's cost, which is not directly given but derived from the cost of 3 action figures and 1 red car. To refine the plan:
\begin{itemize}[leftmargin=*]
  \item \textcolor{red}{First, calculate the doll's cost using the given relationship $(3 \times \text{action figure cost} + 1 \times \text{red car cost})$.}
  \item Ensure all quantities (5 red cars, 3 action figures, 1 doll) are accounted for.
  \item Clarify that the total value includes all toys with their respective quantities and derived costs.
\end{itemize}

The Refined Plan is:
\begin{enumerate}[leftmargin=*]
  \item Determine individual costs.

  \item Calculate total cost for each toy type.

  \item Sum all totals.
\end{enumerate}

\textbf{Solver:}\\
To determine the total value of all the toys, the doll costs as much as $3$ action figures and $1$ red car, so
\textcolor{red}{$
\text{Doll} = 3\cdot \$5 + 1\cdot \$4 = \$19.
$}
Then
$
\text{Red cars} = 5\cdot \$4 = \$20,\quad
\text{Action figures} = 3\cdot \$5 = \$15,\quad
\textcolor{red}{\text{Doll} = 1\cdot \$19 = \$19}.
$
Thus the total value is
$
\$20 + \$15 + \textcolor{red}{\$19} = \textcolor{red}{\$54}.
\textcolor{red}{\text{ The final answer is } \boxed{54}}.
$

\end{tcolorbox}

\begin{tcolorbox}[
  title={Correct Case with \ours (Debug Mode)},
  colframe=cyan!40!black,      %
  colback=cyan!6!white,        %
  colbacktitle=cyan!50!black!80,  %
  coltitle=white,                %
  fonttitle=\bfseries,
  width=\textwidth,
  left=0mm,
  right=0mm,
  boxrule=1pt,
  borderline={0.8pt}{0pt}{cyan!60!black},
  top=1mm,
  bottom=1mm,
  arc=1mm
]
\footnotesize

\textbf{Question:}\\
Jean is two years older than Mark. Two years ago Mark was 5 years older than half Jan's age. If Jan is 30 how old is Jean?\\

\hrule 
\vspace{10pt}
\textbf{Planner:}
\begin{enumerate}[leftmargin=*]
    \item Determine Mark's current age by using the information about his age two years ago relative to Jan's age.
    \item Calculate half of Jan's age two years ago and then determine Mark's age at that time based on the given relationship.
    \item Use the relationship between Jean and Mark's ages to find Jean's current age.
\end{enumerate}

\textbf{Critic:}\\
The user wants to know Jean's age based on the given information. The original plan provided by the Planner Agent was a bit incomplete, so I need to figure out where it went wrong.\\

The original plan is a good start but lacks specific details on how to calculate Mark's age two years ago and how to derive Jean's age from Mark's current age. To improve, the plan should explicitly outline the mathematical operations needed, such as calculating half of Jan's current age, adding 5 years, adjusting for the two-year time difference, and then applying the age difference between Jean and Mark. Additionally, the plan should emphasize verifying each step for consistency with the given information.\\

\textbf{Refiner:}\\
First, I need to break down the information step by step. Let's start by noting down all the given information clearly. Jan's current age is $30$. That's a concrete number, so that's a good starting point.\\

Next, the problem mentions something about two years ago. So, I should probably calculate Jan's age two years ago. Since Jan is $30$ now, two years ago she was $30 - 2 = 28$ years old. That seems straightforward.\\

Now, the problem says that two years ago, Mark was $5$ years older than half of Jan's age at that time. So, half of Jan's age two years ago is $28 / 2 = 14$. Then adding 5 years to that gives $14 + 5 = 19$. So, Mark was $19$ years old two years ago. Therefore, Mark's current age would be $19 + 2 = 21$ years old. Wait, that seems a bit young, but maybe that's correct based on the given info.\\

Then, the problem states that Jean is two years older than Mark. If Mark is currently $21$, then Jean would be $21 + 2 = 23$ years old. But wait, that seems like a possible answer, but I need to check if I did all the steps correctly.

Let me verify each step again. Jan is $30$ now. Two years ago, she was $28$. Half of $28$ is $14$. Adding $5$ gives $19$, which is Mark's age two years ago. Adding 2 years brings Mark's current age to $21$. Jean is two years older than Mark, so $21 + 2 = 23$. That seems consistent.\\

Feedback: The original plan is a good start but lacks specific details on how to calculate Mark's age two years ago and how to derive Jean's age from Mark's current age. To improve, the plan should explicitly outline the mathematical operations needed, such as calculating half of Jan's current age, adding 5 years, adjusting for the two-year time difference, and then applying the age difference between Jean and Mark. Additionally, the plan should emphasize verifying each step for consistency with the given information.\\

The Refined Plan is:
\begin{enumerate}[leftmargin=*]
    \item Identify known values.
    \item Calculate Jan's age two years ago.
    \item Determine Mark's age two years ago.
    \item Calculate Mark's current age.
    \item Find Jean's current age.
\end{enumerate}

\textbf{Solver:}\\
According to the provided plan, to solve the problem step-by-step: Jan is currently $30$, so two years ago Jan was $30-2=28$. Half of Jan's age two years ago is $28/2=14$, and Mark was $5$ years older than that at the time, so Mark was $14+5=19$ two years ago and is therefore $19+2=21$ now. Since Jean is $2$ years older than Mark, Jean is $21+2=23$ years old. Hence, the final answer is 
$
\boxed{23}.
$
\end{tcolorbox}

\vspace{-10pt}
\section{Case Study on \ours}
\label{app:case_study}
\vspace{-6pt}

\begin{tcolorbox}[
  title={Case Study (TextMAS on GSM8K)},
  colframe=orange!60!black,     %
  colback=orange!5,             %
  colbacktitle=orange!70!black!80, %
  coltitle=white,               %
  fonttitle=\bfseries,
  width=\textwidth,
  left=0mm,
  right=0mm,
  boxrule=1pt,
  borderline={0.8pt}{0pt}{orange!50!black},
  top=1mm,
  bottom=1mm,
  arc=1mm
]
\footnotesize

\textbf{Input Question:} Debra is monitoring a beehive to see how many bees come and go in a day. She sees 30 bees leave the hive in the first 6 hours, and then she sees 1/2 that many bees return in the next 6 hours. She sees two times as many bees as she saw first leave the hive fly from the hive and leave in the next 6 hours. Then every bee that left before that hadn't already returned returns to the hive in the next 6 hours. How many bees did Debra see return to the hive in the last 6 hours of the day?

\vspace{10pt}
\hrule height 1pt
\vspace{10pt}

\underline{\textbf{TextMAS Output}} 
\vspace{5pt}

\textbf{Planner Agent:}
Okay, let's see. I need to figure out how many bees Debra saw return to the hive in the last 6 hours. Let me break down the problem step by step.
First, the problem mentions different time intervals: first 6 hours, next 6 hours, then the next 6 hours, and finally the last 6 hours. So there are four 6-hour periods in a day, making up 24 hours total. Let me note each period and what happens in each.
Starting with the first 6 hours: Debra sees 30 bees leave the hive. Let me write that down as the first data point. So, Period 1 (0-6 hours): 30 bees leave.
Then, the next 6 hours (Period 2: 6-12 hours), she sees half that number return. Half of 30 is 15. So, 15 bees return in Period 2.
Next, the third period (Period 3: 12-18 hours), she sees two times as many bees as she saw first leave the hive fly from the hive and leave. The first time she saw 30 bees leave, so two times that is 60. So, 60 bees leave in Period 3.
Then, the fourth period (Period 4: 18-24 hours), every bee that left before that hadn't already returned returns to the hive.
\\
\\
\textbf{Critic Agent:}
The reasoning plan correctly identifies all four 6-hour periods and reaches the
right final conclusion, but the intermediate analysis contains several issues
that reduce clarity and efficiency. First, the interpretation of the phrase
``before that'' is revisited multiple times, even though it simply refers to
the event in the previous sentence (the 60 bees leaving), meaning only bees
that left in Period~1 are relevant. Second, the plan temporarily entertains the
possibility that bees may have left in Period~2, despite the problem statement
explicitly describing only returns during this interval. Third, the plan
repeatedly questions whether the 15 returning bees in Period~2 originate from
the initial 30, even though the text directly states they are ``half that many''
and therefore unambiguously tied to Period~1. These detours introduce
redundancy and confusion. Overall, while the conclusion of $15$ returning bees
is correct, the path to it is unnecessarily convoluted and would benefit from a
more concise timeline-based analysis.
\\
\\
\textbf{Refiner Agent:}
I will provide a refined plan based on previous agents' information. 

1. Identify the Time Periods and Events. 
We divide the day into four 6-hour intervals and record the events in each:
\begin{itemize}
    \item Period 1 (0--6 hours): 30 bees leave the hive.
    \item Period 2 (6--12 hours): Half of those bees return, i.e., $15$ bees.
    \item Period 3 (12--18 hours): Twice the number of bees that left in Period~1 leave the hive: $2 \times 30 = 60$ bees.
    \item Period 4 (18--24 hours): Every bee that left \emph{before Period~3} and has not yet returned comes back to the hive.
\end{itemize}

2. Track Bees Leaving and Returning.
\begin{itemize}
    \item Period~1: $30$ bees leave.
    \item Period~2: $15$ of these $30$ bees return, leaving $15$ still outside the hive.
    \item Period~3: $60$ new bees leave (these are not relevant for the final return count).
    \item Period~4: Only bees that left \emph{before Period~3} and have not returned will come back.
\end{itemize}

3. Determine Which Bees Return in Period 4.
The only bees that left before Period~3 are the $30$ from Period~1.  
Of these, $15$ returned in Period~2.  
Thus, the remaining $15$ bees from Period~1 return in Period~4.

4. Return the  Final Answer.
\\
\\
\textbf{Solver Agent:}
Okay, let's see. I need to figure out how many bees Debra saw return to the hive in the last 6 hours. Let me break down the problem step by step.

First, the problem mentions different time intervals: first 6 hours, next 6 hours, then the next 6 hours, and finally the last 6 hours. So there are four 6-hour periods in a day, making up 24 hours total. Let me note each period and what happens in each.
Starting with the first 6 hours: Debra sees 30 bees leave the hive. Let me write that down as the first data point. So, Period 1 (0-6 hours): 30 bees leave.
Then, the next 6 hours (Period 2: 6-12 hours), she sees half that number return. Half of 30 is 15. So, 15 bees return in Period 2.
Next, the third period (Period 3: 12-18 hours), she sees two times as many bees as she saw first leave the hive fly from the hive and leave. The first time she saw 30 bees leave, so two times that is 60. So, 60 bees leave in Period 3.
Then, the fourth period (Period 4: 18-24 hours), every bee that left before that hadn't already returned returns to the hive. So, bees that left in Period 1 and Period 2. But some of those bees may have already returned in Period 2.

\#\#\# Final Answer

Debra saw $\boxed{15}$ bees return to the hive in the last 6 hours of the day. \textcolor{red}{(\ding{55} \textit{Wrong Final Answer})}

\end{tcolorbox}

\begin{tcolorbox}[
  title={Case Study (\ours on GSM8K)},
  colframe=orange!60!black,     %
  colback=orange!5,             %
  colbacktitle=orange!70!black!80, %
  coltitle=white,               %
  fonttitle=\bfseries,
  width=\textwidth,
  left=0mm,
  right=0mm,
  boxrule=1pt,
  borderline={0.8pt}{0pt}{orange!50!black},
  top=1mm,
  bottom=1mm,
  arc=1mm
]
\footnotesize

\textbf{Input Question:} Debra is monitoring a beehive to see how many bees come and go in a day. She sees 30 bees leave the hive in the first 6 hours, and then she sees 1/2 that many bees return in the next 6 hours. She sees two times as many bees as she saw first leave the hive fly from the hive and leave in the next 6 hours. Then every bee that left before that hadn't already returned returns to the hive in the next 6 hours. How many bees did Debra see return to the hive in the last 6 hours of the day?

\vspace{10pt}
\hrule height 1pt
\vspace{10pt}

\underline{\textbf{\ours Output}}

\vspace{5pt}
We are given a sequence of events involving bees leaving and returning to the hive over the course of a day, broken into 6-hour intervals.\\

According to the refiner's plan, we need to track how many bees leave and return during each 6-hour interval, noting that 30 leave first, 15 return next, and 60 more leave afterward. In the final interval, all bees that had left and not yet returned came back, giving a total of 15 remaining from the first group, plus 60 from the third group, for a total of 75 returning bees.\\

Let's break it down step by step based on the plan:

\medskip
\textbf{Step 1: Bees that leave in the first 6 hours.}  
Debra sees
$
30 \text{ bees}
$
leave the hive during the first 6-hour period.

\medskip
\textbf{Step 2: Bees that return in the next 6 hours.}  
In the second 6-hour period, she sees half as many bees return:
$
\frac{1}{2} \times 30 = 15 \text{ bees return}.
$

\medskip
\textbf{Step 3: Bees that leave in the third 6-hour period.}  
In the next 6 hours, twice the number from Step~1 leave:
$
2 \times 30 = 60 \text{ bees leave}.
$

\medskip
\textbf{Step 4: Bees that return in the final 6 hours.}  
During the final 6 hours, every bee that had left earlier and had not yet returned comes back to the hive.

We track the bees that left and determine which have already returned:
\begin{itemize}
    \item From the first 6 hours: $30$ bees left; $15$ returned in Step~2, leaving $15$ bees still out.
    \item From the third 6 hours: $60$ bees left; none have returned yet.
\end{itemize}

Thus, the total number of bees returning in the final 6 hours is
$
15 + 60 = 75 \text{ bees}.
$

\medskip
\textbf{\#\#\#Final Answer:}
$
\boxed{75}
$
\textcolor{impr}{(\ding{51} \textit{Correct Final Answer})}

\vspace{5pt}

\end{tcolorbox}

To comprehensively learn how latent collaboration changes multi-agent reasoning dynamics, we conduct a detailed case study on GSM8K using the Qwen3-14B backbone under the Sequential MAS setting. As shown in the example, TextMAS agents rely on lengthy textual exchanges that often amplify early reasoning errors, misinterpretations by the planner propagate through the critic and refiner, ultimately constraining the solver’s search space. In contrast, \ours operates entirely through latent working-memory transfer: each agent receives rich, continuous representations of prior reasoning rather than brittle text, enabling later agents to reinterpret, refine, and correct upstream reasoning without inheriting surface-level mistakes. This latent collaboration leads to more coherent intermediate steps, more stable numerical reasoning, and ultimately yields the correct final answer, where TextMAS fails. The case study illustrates how \ours mitigates error compounding in multi-agent pipelines and demonstrates the qualitative advantage of latent over text-based communication.

\newpage
\section{Prompt Template for \ours}

\begin{figure}[!h]
\begin{tcolorbox}[
  title={Sequential LatentMAS Prompts on Numeric Tasks (GSM8K / AIME2024 / AIME2025)},
  colframe=violet!40!black,      %
  colback=violet!6!white,        %
  colbacktitle=violet!50!black!80,  %
  coltitle=white,                %
  fonttitle=\bfseries,
  width=\textwidth,
  left=0mm,
  right=0mm,
  boxrule=1pt,
  borderline={0.8pt}{0pt}{violet!60!black},
  top=1mm,
  bottom=1mm,
  arc=1mm
]
\footnotesize

\textbf{System Prompt for All Agents:}\\
You are Qwen, created by Alibaba Cloud. You are a helpful assistant.

\vspace{6pt}
\textbf{Prompt for Planner Agent:}\\
You are a Planner Agent. Given an input question, design a clear, step-by-step plan for how to solve the question.
Question: \{question\}

Your outlined plan should be concise with a few bulletpoints for each step. Do not produce the final answer.
Now output your plan to solve the question below:

\vspace{6pt}
\textbf{Prompt for Critic Agent:}\\
You are a Critic Agent to evaluate the correctness of the input plan for the given question and provide helpful feedback for improving the plan.
Question: \{question\}

The plan information is provided in latent KV representation format. Review the plan and question and output:
(1) original plan contents
(2) constructive feedback on the original plan.
Format your response as follows:
Original Plan: [Copy the provided Planner Agent's plan here]
Feedback: [Your detailed feedback to improve the plan here]
Now, output your response below:

\vspace{6pt}
\textbf{Prompt for Refiner Agent:}\\
You are a Refiner Agent to provide a refined step-by-step plan for solving the given question.
Question: \{question\}

You are provided with:
(1) latent-format information: a previous plan with feedback
(2) text-format information: the input question you need to solve.
Based on the input, write a refined and improved plan to solve the question. Make sure your output plan is correct and concise.
Now, output your refined plan below:

\vspace{6pt}
\textbf{Prompt for Solver Agent:}\\
You are a helpful assistant. You are provided with latent information for reference and a target question to solve.
Target Question: \{question\}

The latent information might contain irrelevant contents. Ignore it if it is not helpful for solving the target question.
Now, reason step by step and output the final answer inside \textbackslash boxed\{YOUR\_FINAL\_ANSWER\}:

\end{tcolorbox}
\label{fig:seq-latentmas-numeric}
\end{figure}

\begin{figure}[!h]
\begin{tcolorbox}[
  title={Sequential LatentMAS prompts for multiple-choice tasks (ARC-E, ARC-C, GPQA, MedQA)},
  colframe=violet!40!black,      %
  colback=violet!6!white,        %
  colbacktitle=violet!50!black!80,  %
  coltitle=white,                %
  fonttitle=\bfseries,
  width=\textwidth,
  left=0mm,
  right=0mm,
  boxrule=1pt,
  borderline={0.8pt}{0pt}{violet!60!black},
  top=1mm,
  bottom=1mm,
  arc=1mm
]
\footnotesize

\textbf{System Prompt for All Agents:}\\
You are Qwen, created by Alibaba Cloud. You are a helpful assistant.

\vspace{6pt}
\textbf{Prompt for Planner Agent:}\\
You are a Planner Agent. Given an input question, design a clear, step-by-step plan for how to solve the question.
Question: \{question\}

Your outlined plan should be concise with a few bulletpoints for each step. Do not produce the final answer.
Now output your plan to solve the question below:

\vspace{6pt}
\textbf{Prompt for Critic Agent:}\\
You are a Critic Agent to evaluate the correctness of the input plan for the given question and provide helpful feedback for improving the plan.
Question: \{question\}

The plan information is provided in latent KV representation format. Review the plan and question and output:
(1) original plan contents
(2) constructive feedback on the original plan.
Format your response as follows:
Original Plan: [Copy the provided Planner Agent's plan here]
Feedback: [Your detailed feedback to improve the plan here]
Now, output your response below:

\vspace{6pt}
\textbf{Prompt for Refiner Agent:}\\
You are a Refiner Agent to provide a refined step-by-step plan for solving the given question.
Question: \{question\}

You are provided with:
(1) latent-format information: a previous plan with feedback
(2) text-format information: the input question you need to solve.
Based on the input, write a refined and improved plan to solve the question. Make sure your output plan is correct and concise.
Now, output your refined plan below:

\vspace{6pt}
\textbf{Prompt for Solver Agent:}\\
You are a helpful assistant. You are provided with latent information for reference and a target question to solve.
Target Question: \{question\}

The latent information might contain irrelevant contents. Ignore it if it is not helpful for solving the target question.
Your final answer must be selected from A,B,C,D. For example \textbackslash boxed\{A\}. Do not add any other contents inside the box.
Now, reason step by step and output the final answer inside \textbackslash boxed\{YOUR\_FINAL\_ANSWER\}:

\end{tcolorbox}
\label{fig:seq-latentmas-mc}
\end{figure}

\begin{figure}[!t]
\begin{tcolorbox}[
  title={Sequential LatentMAS prompts for python coding tasks (MBPP-Plus, HumanEval-Plus)},
  colframe=violet!40!black,      %
  colback=violet!6!white,        %
  colbacktitle=violet!50!black!80,  %
  coltitle=white,                %
  fonttitle=\bfseries,
  width=\textwidth,
  left=0mm,
  right=0mm,
  boxrule=1pt,
  borderline={0.8pt}{0pt}{violet!60!black},
  top=1mm,
  bottom=1mm,
  arc=1mm
]
\footnotesize

\textbf{System Prompt for All Agents:}\\
You are Qwen, created by Alibaba Cloud. You are a helpful assistant.

\vspace{6pt}
\textbf{Prompt for Planner Agent:}\\
You are a Planner Agent. Given an input question, design a clear, step-by-step plan for how to solve the question.
Question: \{question\}

Your outlined plan should be concise with a few bulletpoints for each step. Do not produce the final answer.
Now output your plan to solve the question below:

\vspace{6pt}
\textbf{Prompt for Critic Agent:}\\
You are a Critic Agent to evaluate the correctness of the input plan for the given question and provide helpful feedback for improving the plan.
Question: \{question\}

The plan information is provided in latent KV representation format. Review the plan and question and output:
(1) original plan contents
(2) constructive feedback on the original plan.
Format your response as follows:
Original Plan: [Copy the provided Planner Agent's plan here]
Feedback: [Your detailed feedback to improve the plan here]
Now, output your response below:

\vspace{6pt}
\textbf{Prompt for Refiner Agent:}\\
You are a Refiner Agent to provide a refined step-by-step plan for solving the given question.
Question: \{question\}

You are provided with:
(1) latent-format information: a previous plan with feedback
(2) text-format information: the input question you need to solve.
Based on the input, write a refined and improved plan to solve the question. Make sure your output plan is correct and concise.
Now, output your refined plan below:

\vspace{6pt}
\textbf{Prompt for Solver Agent:}\\
You are a helpful assistant. You are provided with latent information for reference and a target question to solve.
Target Question: \{question\}

The latent information might contain irrelevant contents. Ignore it if it is not helpful for solving the target question.
Your must reason step-by-step to solve the **provided Target Question** witout outputing other irrelevant inforamtion.
You must put all python code as self-contained Python function in markdown code blocks. For example \\
\verb|```|python\\
import math\\
def add(a, b):\\
    return a + b\verb|```|. \\
Do not add any other contents inside the markdown code block.
Now, reason step by step and output the final answer inside \verb|```|python
YOUR\_PYTHON\_CODE\verb|```|:

\end{tcolorbox}
\label{fig:seq-latentmas-code}
\end{figure}

\begin{figure}[!t]
\begin{tcolorbox}[
  title={Hierarchical LatentMAS prompts for numeric-answer tasks (GSM8K, AIME2024, AIME2025)},
  colframe=violet!40!black,      %
  colback=violet!6!white,        %
  colbacktitle=violet!50!black!80,  %
  coltitle=white,                %
  fonttitle=\bfseries,
  width=\textwidth,
  left=0mm,
  right=0mm,
  boxrule=1pt,
  borderline={0.8pt}{0pt}{violet!60!black},
  top=1mm,
  bottom=1mm,
  arc=1mm
]
\footnotesize

\textbf{System prompt for All Agents:}\\
You are Qwen, created by Alibaba Cloud. You are a helpful assistant.

\vspace{6pt}
\textbf{Prompt for Math Agent:}\\
You are a math agent. Given the input question, reason step-by-step and put the final answer inside \textbackslash boxed\{YOUR\_FINAL\_ANSWER\}.
Question: \{question\}

Your response:

\vspace{6pt}
\textbf{Prompt for Science Agent:}\\
You are a science agent. Given the input question, reason step-by-step and put the final answer inside \textbackslash boxed\{YOUR\_FINAL\_ANSWER\}.
Question: \{question\}

Your response:

\vspace{6pt}
\textbf{Prompt for Code Agent:}\\
You are a code agent. Given the input question, reason step-by-step and put the final answer inside \textbackslash boxed\{YOUR\_FINAL\_ANSWER\}.
Question: \{question\}

Your response:

\vspace{6pt}
\textbf{Prompt for Task Summarizer Agent:}\\
You are a task summarizer. Given the input question and responses from previous agents as reference, reason step-by-step and put the final answer inside \textbackslash boxed\{YOUR\_FINAL\_ANSWER\}.

Question: \{question\}

Your response:

\end{tcolorbox}
\label{fig:hier-latentmas-numeric}
\end{figure}

\begin{figure}[!t]
\begin{tcolorbox}[
  title={Hierarchical LatentMAS prompts for multiple-choice tasks (ARC-E, ARC-C, GPQA, MedQA)},
  colframe=violet!40!black,      %
  colback=violet!6!white,        %
  colbacktitle=violet!50!black!80,  %
  coltitle=white,                %
  fonttitle=\bfseries,
  width=\textwidth,
  left=0mm,
  right=0mm,
  boxrule=1pt,
  borderline={0.8pt}{0pt}{violet!60!black},
  top=1mm,
  bottom=1mm,
  arc=1mm
]
\footnotesize

\textbf{System Prompt for All Agents:}\\
You are Qwen, created by Alibaba Cloud. You are a helpful assistant.

\vspace{6pt}
\textbf{Prompt for Math Agent:}\\
You are a math agent. Given the input question, reason step-by-step and put the final answer inside \textbackslash boxed\{YOUR\_FINAL\_ANSWER\}.
Your final answer must be selected from A,B,C,D. For example \textbackslash boxed\{A\}. Do not add any other contents inside the box.
Question: \{question\}

Your response:

\vspace{6pt}
\textbf{Prompt for Science Agent:}\\
You are a science agent. Given the input question, reason step-by-step and put the final answer inside \textbackslash boxed\{YOUR\_FINAL\_ANSWER\}.
Your final answer must be selected from A,B,C,D. For example \textbackslash boxed\{A\}. Do not add any other contents inside the box.
Question: \{question\}

Your response:

\vspace{6pt}
\textbf{Prompt for Code Agent:}\\
You are a code agent. Given the input question, reason step-by-step and put the final answer inside \textbackslash boxed\{YOUR\_FINAL\_ANSWER\}.
Your final answer must be selected from A,B,C,D. For example \textbackslash boxed\{A\}. Do not add any other contents inside the box.
Question: \{question\}

Your response:

\vspace{6pt}
\textbf{Prompt for Task Summarizer Agent:}\\
You are a task summarizer. Given the input question and responses from previous agents as reference, reason step-by-step and put the final answer inside \textbackslash boxed\{YOUR\_FINAL\_ANSWER\}.
Your final answer must be selected from A,B,C,D. For example \textbackslash boxed\{A\}. Do not add any other contents inside the box.
Question: \{question\}

Your response:

\end{tcolorbox}
\label{fig:hier-latentmas-mc}
\end{figure}

\begin{figure}[!t]
\begin{tcolorbox}[
  title={Hierarchical LatentMAS prompts for python coding tasks (MBPP-Plus, HumanEval-Plus)},
  colframe=violet!40!black,      %
  colback=violet!6!white,        %
  colbacktitle=violet!50!black!80,  %
  coltitle=white,                %
  fonttitle=\bfseries,
  width=\textwidth,
  left=0mm,
  right=0mm,
  boxrule=1pt,
  borderline={0.8pt}{0pt}{violet!60!black},
  top=1mm,
  bottom=1mm,
  arc=1mm
]
\footnotesize

\textbf{System Prompt for All Agents:}\\
You are Qwen, created by Alibaba Cloud. You are a helpful assistant.

\vspace{6pt}
\textbf{Prompt for Math Agent:}\\
You are a math agent. Given the input question, reason step by step and provide an efficient and self-contained Python function that solves the following problem in a markdown code block.
You must put all python code as self-contained Python function in markdown code blocks. For example \\
\verb|```|python\\
import math\\
def add(a, b):\\
    return a + b\verb|```|. Do not add any other contents inside the markdown code block. 

Question: \{question\}

Your response:

\vspace{6pt}
\textbf{Prompt for Science Agent:}\\
You are a science agent. Given the input question, reason step by step and provide an efficient and self-contained Python function that solves the following problem in a markdown code block.\\
You must put all python code as self-contained Python function in markdown code blocks. For example \\
\verb|```|python\\
import math\\
def add(a, b):\\
    return a + b\verb|```|. Do not add any other contents inside the markdown code block. 

Question: \{question\}

Your response:

\vspace{6pt}
\textbf{Prompt for Code Agent:}\\
You are a code agent. Given the input question, reason step by step and provide an efficient and self-contained Python function that solves the following problem in a markdown code block.\\
You must put all python code as self-contained Python function in markdown code blocks. For example \\
\verb|```|python\\
import math\\
def add(a, b):\\
    return a + b\verb|```|. Do not add any other contents inside the markdown code block. 

Question: \{question\}

Your response:

\vspace{6pt}
\textbf{Prompt for Task Summarizer Agent:}\\
You are a task summarizer. Given the input question and responses from previous agents as reference, reason step by step and provide an efficient and self-contained Python function that solves the following problem in a markdown code block.\\
You must put all python code as self-contained Python function in markdown code blocks. For example \\
\verb|```python|\\
import needed\_library\\
def FUNC\_NAME(a, b):\\
    return a + b\verb|```|. Do not add any other contents inside the markdown code block. 
    
Input Question: \{question\}

Your response:

\end{tcolorbox}
\label{fig:hier-latentmas-code}
\end{figure}

\end{document}